\DeclareMathOperator*{\argmax}{arg\,max}
\newtheorem{theorem}{Theorem}
\newtheorem{example}[theorem]{Example}
\newtheorem{lemma}[theorem]{Lemma}
\newtheorem{remark}[theorem]{Remark}
\newtheorem{obsn}[theorem]{Observation}
\newcommand{\worstoffdro}{{Worst-off DRO}}
\newcommand{\groupdro}{{Group DRO}}
\newcommand{\unsupdro}{{Unsup DRO}}
\title{Towards Group Robustness in the presence of\\ Partial Group Labels}
\author{ Vishnu Suresh Lokhande\thanks{Work done as a research intern at Google.} \hspace{0.01in}\textsuperscript{\rm 1}, Kihyuk Sohn\textsuperscript{\rm 3}, Jinsung Yoon\textsuperscript{\rm 3}, Madeleine Udell\thanks{Work done as a visiting researcher at Google.} \hspace{0.02in}\textsuperscript{\rm 2},\\ \hspace{0.01in} \textbf{Chen-Yu Lee}\textsuperscript{\rm 3} \textbf{ and Tomas Pfister}\textsuperscript{\rm 3}\\
\textsuperscript{\rm 1}University of Wisconsin-Madison, \textsuperscript{\rm 2}Cornell University,
\textsuperscript{\rm 3}Google Cloud AI Research\\  
}
\begin{document}

\maketitle

\begin{abstract}
    Learning invariant representations is an important requirement when training machine learning models that are driven by spurious correlations in the datasets. 
    These spurious correlations, between input samples and the target labels, wrongly direct the neural network predictions resulting in poor performance on certain groups, especially the minority groups. 
    Robust training against these spurious correlations requires the knowledge of group membership for every sample. 
    Such a requirement is impractical in situations where the data labelling efforts for minority or rare groups is significantly laborious or where the individuals comprising the dataset choose to conceal sensitive information. 
    On the other hand, the presence of such data collection efforts result in datasets that contain partially labelled group information.
    Recent works have tackled the fully unsupervised scenario where no labels for groups are available. 
    Thus, we aim to fill the missing gap in the literature by tackling a more realistic setting that can leverage partially available sensitive or group information during training. 
    First, we construct a constraint set and derive a high probability bound for the group assignment to belong to the set.
    Second, we propose an algorithm that optimizes for the worst-off group assignments from the constraint set. 
    Through experiments on image and tabular datasets, we show improvements in the minority group's performance while preserving overall aggregate accuracy across groups.
\end{abstract}

\section{Introduction}
\label{sec:intro}
\vspace{-0.05in}

Neural networks being overly biased to certain groups of the data is an increasing concern within the machine learning community \citep{agarwal2018reductions}. 
A primary cause for bias against specific groups is the presence of extraneous attributes in the datasets that wrongly direct the model responses \citep{xie2017controllable}. 
Such extraneous attributes are features that need to be controlled for. 
For example, in computer vision tasks such as image classification or object detection, an extraneous attribute could correspond to the background in an image or a co-occurring object irrelevant to the task, e.g. a person making a speech in a football field could be predicted as playing football \citep{choi2019can}. 
The presence of such extraneous attributes warrant a model to derive the predictions by making spurious correlations to extraneous features in an image rather than an actual object of interest. 
An inevitable consequence of such correlations to extraneous attributes is disparities in performance across different groups within the dataset. 
Specifically, if certain groups form a minority, a model can simply \textit{cheat} by having a high overall aggregate accuracy but poor minority group accuracy \citep{oakden2020hidden}.

\begin{figure}
    \centering
    \begin{subfigure}{0.45\textwidth}
    \includegraphics[width=0.95\textwidth]{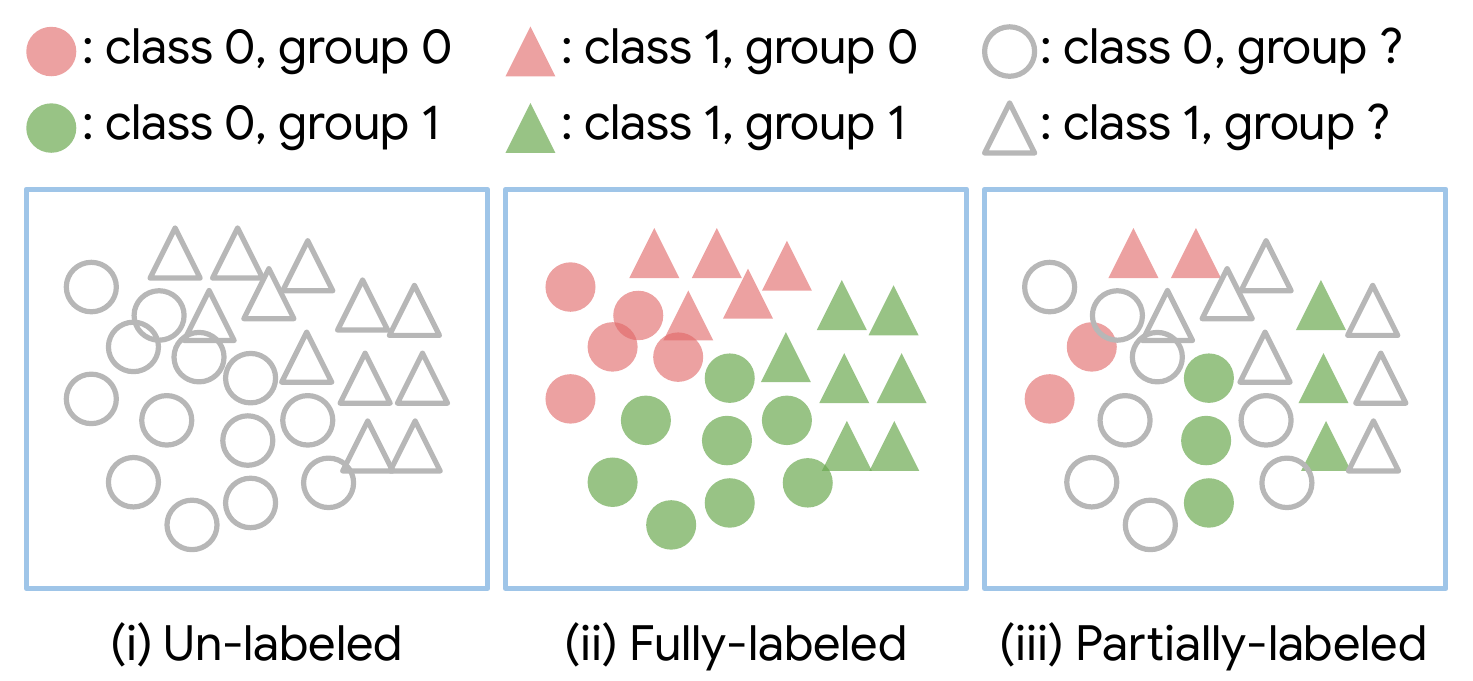}
    \caption{Problem settings.}
    \label{fig:teasar_setting}
    \end{subfigure}
    \hspace{0.05in}
    \begin{subfigure}{0.45\textwidth}
    \includegraphics[width=0.95\textwidth]{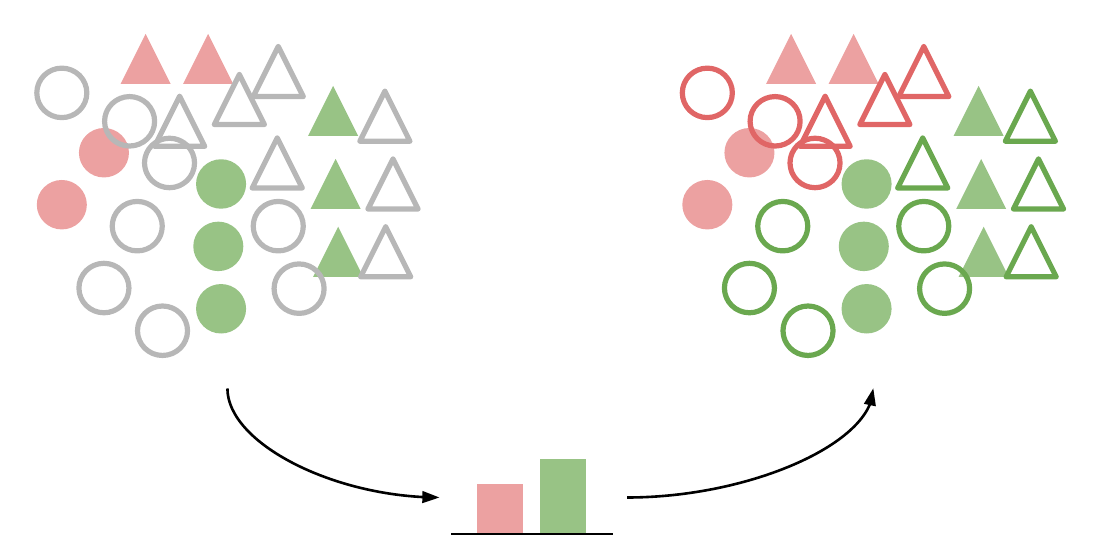}
    \caption{Find the worst-case group assignment with the marginal distribution constraint.}
    \label{fig:teasar_worstoffdro}
    \end{subfigure}
    \vspace{-0.05in}
    \caption{\footnotesize 
    (a) We introduce {\worstoffdro}, an invariant learning algorithm for  partial group labeled data (as shown in (iii)). This is in contrast with other settings, such as (unsupervised) DRO~\citep{hashimoto2018fairness} where (i) no group labels are available at train time, or {\groupdro}~\citep{sagawa2019distributionally} when (ii) group labels for all training examples are required. 
    (b) {\worstoffdro} finds the worst-case group assignment for unlabelled data using marginal distribution constraints, which may be given as side information, or estimated from labeled counterparts. 
    The constraint set defined by the marginal distribution includes the ground-truth group labels with high probability, ensuring the training objective is an upper bound of that of the {\groupdro} objective.}
    \label{fig:teasar}
    \vspace{-0.25in}
\end{figure}

Existing works for this problem~\citep{arjovsky2019invariant,sagawa2019distributionally} operate in the regime where the number of groups, likely to be adversely impacted through spurious correlations, are known apriori. 
Further, they assume a complete knowledge of the group membership of individual samples in the training dataset. 
While these methods have been proven effective, it is not realistic to assume access to the group labels for every sample. 
Consider the scenario where the minority / majority groups could be defined by demographic information such as \textit{gender} or \textit{race}. 
An individual can simply choose not to reveal this information due to privacy considerations \citep{kehrenberg2020null}. 
Alternatively, in medical image settings a label class could contain unrecognized subgroups that demand significant burden on the data labelling efforts \citep{NEURIPS2020_e0688d13}. 
An example of such unrecognized subgroups could be a lung cancer detection problem, where the class label could comprise of groups such as solid/subsolid tumors and central/peripheral neoplasms \citep{oakden2020hidden}, with many of these groups naturally forming a minority in the dataset. 
In this work, we consider a setting where a significant portion of the training data is devoid of group labels. 
We choose to fill a missing gap in the literature where several works bifurcate into methods that either are fully supervised or fully unsupervised in terms of the groups labels. 
Knowledge on the number of groups in the data makes it convenient to obtain group labels for a tiny portion of the data or take advantage of an existing labelled samples if available. 
Hence, we address the following research question: \textit{Can we train a model that is invariant to group membership when partially labelled data is available?}

We answer the question using a framework of distributionally robust optimization (DRO)~\citep{shapiro2021lectures,namkoong2016stochastic}. 
DRO allows for a training routine that optimizes for the worst-case training loss over predefined set of groups closely connected to the Rawlsian fairness measure \citep{rawls2001justice}. 
When the group membership is fully known, the method simply upweights/downweights average training loss of different groups through the course of training \citep{sagawa2019distributionally}. 
The application of DRO to the partial group label setting poses significant challenges: (1) the lack of group label makes it infeasible to compute the worst-off group loss; (2) optimizing only for the high-loss samples, by considering them as a worst-off group, discards considerable portion of the training data adversely impacting the overall accuracy; and (3) inferring missing group labels with pseudo-label based methods is a cause for ethical concerns especially when group labels are characteristic of sensitive information.

The third challenge above alludes a straightforward way of handling partially labelled setting where we directly estimate the group label for each sample with pseudo-label based methods. However, this approach could be harmful in the context of fairness problems because the estimated labels could be misused by a wrongdoer. For example, when the groups are indicative of sensitive information such as age or gender, an incorrect group estimation would wrongly designate the demographics of an individual. Moreover, when it's desirable and intended to conceal such sensitive information, a direct estimation of groups would be a violation of privacy. Thus we \textbf{cautiously avoid} building of utilizing pseudo-label based methods in this paper.

In light of all these challenges, we make the following \textbf{contributions}. We propose a method that defines a constraint set of group assignments and optimizes over all possibles configurations of the assignments within the set. Such a constraint set can encompass the group labels of the unlabeled portion of the data with high probability. We observe that optimizing for the a worst-off soft group assignment from the constraint set upper bounds the DRO objective with true group membership. Since worst-off assignments do not directly estimate the ground-truth assignments, our approach is theoretically safe and does not violate privacy. As we shall see in the paper, our method assigns high loss samples to groups with a higher weight and lower marginal probability improving the worst-off group accuracies. 
Further, the low loss samples are not discarded thus preserving the aggregate accuracy across groups. 
We show experiments on three imaging datasets and one tabular dataset and contrast the performance against several baselines.

\vspace{-0.15in}
\section{Related work}
\label{sec:related}
\vspace{-0.05in}

Distributionally robust optimization~\citep{ben2013robust,namkoong2016stochastic,duchi2021statistics} has been studied as a way to train robust ML models across multiple groups and environments. 

\noindent\textbf{Group Robust Optimization.} Methods in the literature handling robustness to extraneous attributes can be broadly categorized into two classes. The first class, domain generalization methods \cite{arjovsky2019invariant,mahajan2021domain,NEURIPS2018_invrep_noadv}, aim at learning representations invariant to a predefined set of extraneous attributes or groups. The goal is to be able to generalize to unseen domains or environments in the testing phase. On the other hand, the second class of methods, called as the group robust methods \citep{oakden2020hidden,sagawa2019distributionally,liu2021just}, seek to improve the worst-off or the minority group performance within the set of pre-defined groups. Here the training and testing phases share the same set of groups. Our approach falls into the second class of methods. 

\noindent\textbf{Robust Optimization with Demographics.} When group information is known at train time, {\groupdro}~\citep{hu2018does,oren2019distributionally,sagawa2019distributionally,mohri2019agnostic,zhang2020coping} or Invariant Risk Minimization (IRM)~\citep{arjovsky2019invariant} could be employed to improve the performance over multiple groups. Specifically, {\groupdro} proceeds by minimizing the loss of the group with the largest loss, while IRM enforces a shared predictor across multiple environments to be optimal in the form of a Lagrangian multiplier.

\noindent\textbf{Robust Optimization without Demographics.} As the group information may not be always available reliably, several studies have been focusing on developing methods that remove or reduce their dependence on the group information. \cite{hashimoto2018fairness} has developed a method based on the distributionally robust optimization that minimizes the loss of the samples with losses larger than a certain threshold. \cite{NEURIPS2020_07fc15c9} has proposed to reweight the samples in an adversarial way so that the high loss sample could receive more weight over the course of training. Moreover, \cite{liu2021just} has proposed a simple yet effective two-stage approach called Just-Train-Twice (JTT) that trains a model by upweighting samples with high losses from the initial ERM model. 

\noindent\textbf{Two-stage methods.}
\label{sec:twostage}
Recent methods, like JTT \cite{liu2021just} and EIIL \cite{creager2021environment},  which do not require group or demographic information adopt a two-stage pipeline for training. While the two-stage methods may attain better accuracy measures, they bear a few drawbacks in relation to a single-stage method. Firstly, two-stage methods introduce additional set of hyper-parameters that need to be tuned. For example, it's crucial for JTT to tune for the number of epochs to convergence in its first (identification) stage. Similarly, EIIL requires a pre-trained reference model and optimization of the EI objective that introduce several hyper-parameters. While certain parameters like learning rate, weight-decay and architecture could be shared across stages, such choice still need to be empirically verified on every new dataset. Secondly, in a two-stage model, a failed first stage leads to an unsuccessful second stage as errors from the first stage are propagated to the later stages. A first stage model could fail due to the model overfitting to the training data in the JTT method~\cite{liu2021just}, and similarly in EIIL inaccurate group inference may block second-stage invariant learning besides raising ethical issues on pseudo-label misuse. In summary, efforts to reduce a two-stage model to a single-stage method are beneficial and, as we shall see shortly, our proposal benefits from being a single stage method.


\vspace{-0.1in}
\section{Methodology}
\label{sec:method}
\vspace{-0.05in}
We introduce our robust optimization framework, {\worstoffdro}, with partial group information. We revisit the GroupDRO in Section~\ref{sec:method_problem} and detail our method in Section~\ref{sec:method_worstoffdro} and \ref{sec:method_marginal_distribution}. In Section~\ref{sec:method_optimization}, we describe a practical method for optimization.


\vspace{-0.05in}
\subsection{Preliminary: Group DRO}
\label{sec:method_problem}

Let $x\,{\in}\,\mathcal{X}\,{\subset}\,\mathbb{R}^{d}$ be data descriptors, $y\,{\in}\,\mathcal{Y}\,{\subset}\,\{0,1\}$ be target labels, and $g\,{\in}\,\mathcal{G}\,{\subset}\,\{1,...,M\}$ be group labels.
We assume training a neural network parameterized by the weights $w$ that corresponds to a per-sample loss $l(x, y; w)$.
Given data triplets $\{(x_{i},y_{i},g_{i})\}_{i=1}^{N}$, we seek to optimize $w$ for the Rawlsian criterion~\citep{rawls2001justice,zhang2014fairness,hashimoto2018fairness}, which minimizes the loss of the worst-off group, as follows:
%
\begin{equation}
    \min_{w} \max_{g\in\mathcal{G}} \mathbb{E}\big[l(x,y;w)|g\big].
    \label{eq:rawlsian}
\end{equation}
\citet{sagawa2019distributionally} proposed a practical algorithm to solve \eqref{eq:rawlsian}, called \textit{{\groupdro}}. This method optimizes a weighted expected loss across all groups. These weights over the groups, denoted by $q$, are drawn from a simplex $\Delta^M$. The objective function is as described below,
%
\begin{equation}
    \mathcal{L}_{\mathrm{GDRO}}=\min_{w}\max_{q\in\Delta^{M}} \sum_{j=1}^{M} \overbracket[1pt]{q_{j}}^{\parbox{2em}{\tiny Group\\Weights}} \underbracket[1pt]{\Big[\frac{\sum_{i=1}^{N} \overbracket[1pt]{\mathbbm{1}\{g_{i}=j\}}^{\text{Indicator function}} l(x_{i}, y_{i}, w)}{\sum_{i=1}^{N} \mathbbm{1}\{g_{i}=j\}}\Big]}_{\text{Per-group average loss}}\label{eq:groupdro_reform}
\end{equation}

\vspace{-0.1in}
\subsection{{\worstoffdro}}
\label{sec:method_worstoffdro}
\vspace{-0.05in}
In this work, we are interested in training a distributionally robust neural network when group labels are only partially available in the entire dataset. That is, our training dataset constitutes of the fully-labeled dataset $\{(x_{i}, y_{i}, g_{i}^{\star})\}_{i=1}^{K}$ and the task-labeled dataset $\{(x_{i}, y_{i}, -)\}_{i=K+1}^{N}$, where $-$ indicates the missing group labels.

As noted in \eqref{eq:groupdro_reform}, the {\groupdro} requires group labels of entire dataset. When some of them are missing, we propose to optimize for the following objective:
\begin{equation}
    \mathcal{L}_{\mathrm{WDRO}}(\mathcal{C})=\min_{w}\max_{q\in\Delta^{M}}\max_{\{\hat{g}\}\in\mathcal{C}}\sum_{j=1}^{M} q_{j} \Big[\frac{\sum_{i=1}^{N} \mathbbm{1}\{\hat{g}_{i}=j\} l(x_{i}, y_{i}, w)}{\sum_{i=1}^{N} \mathbbm{1}\{\hat{g}_{i}=j\}}\Big]\label{eq:worstoffdro_nocon}
\end{equation}
where $\mathcal{C}$ is a set of group assignments $\{\hat{g}_{i}\}_{i=1}^{N}$ satisfying $\hat{g}_{i}\,{=}\,g_{i}^{\star}, \forall i\leq K$. We call the objective in \eqref{eq:worstoffdro_nocon} a \textbf{\worstoffdro} as it optimizes neural network parameters with respect to the worst-off group assignment in a certain constraint set $\mathcal{C}$ (more details on the design of $\mathcal{C}$ soon). 

%
Note that the {\worstoffdro} objective forms an upper bound to the {\groupdro} objective evaluated at the ground-truth group labels if $\{g_{i}^{\star}\}_{i=1}^{N}\,{\in}\,\mathcal{C}$. Under identical parameters $w$ and $q$, this is rather a straightforward consequence from the fact that the ground-truth group assignment $\{g^{\star}_{i}\}_{i=1}^{N}$ falls within the constraint set $\mathcal{C}$. However, the following lemma generalizes the upper bound relationship between {\worstoffdro} and {\groupdro} objectives for all $w$ and $q$.

\begin{lemma}
    \label{lemma:upper_bound}
    Denote $\mathcal{L}_{\mathrm{GDRO}}$ at a given $w$ and $q$ parameters as $\mathcal{L}_{\mathrm{GDRO}(w, q)}$. Similarly $\mathcal{L}_{\mathrm{WDRO}}(\mathcal{C})$ at a fixed $w$ and $q$ as $\mathcal{L}_{\mathrm{WDRO}(w, q)}(\mathcal{C})$. When the ground-truth group assignment $\{g^{\star}_{i}\}_{i=1}^{N} \in \mathcal{C}$, we have
    \begin{align}
        \min_{w}\max_{q\in\Delta^{M}}\mathcal{L}_{\mathrm{GDRO}(w, q)} \le \min_{w}\max_{q\in\Delta^{M}}\mathcal{L}_{\mathrm{WDRO}(w, q)}(\mathcal{C})
    \end{align}
\end{lemma}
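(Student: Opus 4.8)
The plan is to reduce the claim to a pointwise domination of the two objectives, followed by the monotonicity of $\min$ and $\max$ under the pointwise order. First I would fix an arbitrary pair $(w,q)$ and observe that $\mathcal{L}_{\mathrm{WDRO}(w,q)}(\mathcal{C})$ is itself a maximum, over all feasible assignments $\{\hat{g}_i\}_{i=1}^N\in\mathcal{C}$, of the weighted per-group average loss. Since the hypothesis guarantees $\{g_i^\star\}_{i=1}^N\in\mathcal{C}$, the ground-truth assignment is one admissible competitor in that inner maximization, and evaluating the WDRO summand at $\hat{g}_i = g_i^\star$ reproduces term-by-term the GDRO summand from \eqref{eq:groupdro_reform}. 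Hence
\begin{equation*}
\mathcal{L}_{\mathrm{GDRO}(w,q)} \;\le\; \max_{\{\hat{g}\}\in\mathcal{C}} \sum_{j=1}^{M} q_j \Big[\tfrac{\sum_{i} \mathbbm{1}\{\hat{g}_i = j\}\, l(x_i,y_i,w)}{\sum_{i} \mathbbm{1}\{\hat{g}_i = j\}}\Big] \;=\; \mathcal{L}_{\mathrm{WDRO}(w,q)}(\mathcal{C}),
\end{equation*}
and crucially this holds for \emph{every} $(w,q)$ simultaneously.

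Next I would lift this pointwise inequality through the two optimization layers, using that both $\max$ and $\min$ are order-preserving. Fixing $w$ and taking $\max_{q\in\Delta^M}$ on each side preserves the inequality, giving $\max_{q}\mathcal{L}_{\mathrm{GDRO}(w,q)} \le \max_{q}\mathcal{L}_{\mathrm{WDRO}(w,q)}(\mathcal{C})$ for every $w$; this step is legitimate precisely because the same $w$ and the same feasible set $\Delta^M$ appear on both sides, so we are comparing two functions of $q$ rather than resolving a saddle point. Finally, since this inequality between the two $q$-maximized functions holds for all $w$, taking $\min_w$ again preserves the order and yields the claimed bound.

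The one point requiring care is that this is genuinely a comparison of two objectives under an identical $\min$--$\max$ template, not an interchange of $\min$ and $\max$; consequently no minimax/saddle-point theorem and no convexity assumption on $l(\cdot,\cdot;w)$ is needed. I expect the only real obstacle to be purely notational: verifying that substituting $\hat{g}_i = g_i^\star$ into the WDRO group-average reduces exactly to the GDRO group-average in \eqref{eq:groupdro_reform}, i.e.\ that the indicator-weighted numerators and denominators coincide. This is immediate, but it should be written out so the reader sees that the two objectives \emph{agree} at the ground-truth assignment rather than merely being comparable, which is what makes $g^\star$ a valid feasible point realizing the lower side of the bound.
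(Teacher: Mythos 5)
Your proof is correct and follows essentially the same route as the paper: both establish the pointwise inequality $\mathcal{L}_{\mathrm{GDRO}(w,q)} \le \mathcal{L}_{\mathrm{WDRO}(w,q)}(\mathcal{C})$ from feasibility of $\{g_i^\star\}$ in the inner maximization, then lift it through $\max_q$ and $\min_w$ by order preservation (the paper merely makes this lifting explicit via the argmax points $q^*_{\mathrm{GDRO}}$ and $q^*_{\mathrm{WDRO}}$). No gap; your phrasing via monotonicity of $\min$ and $\max$ under pointwise domination is an equivalent and slightly more streamlined presentation of the paper's argument.
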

The proof is in Appendix~\ref{sec:proof_of_lemma}. For safety-critical applications, such as learning a fair classifier, it is important that the optimal objective (i.e., {\groupdro} with a ground-truth group assignment) is bounded by the objective used in optimization as in Lemma~\ref{lemma:upper_bound}. This is simply because optimizing the proposed learning objective \textbf{guarantees} that the corresponding lower bound of ground-truth {\groupdro} is also optimized. Conversely, objectives of methods such as EIIL~\citep{creager2021environment} or GEORGE~\citep{NEURIPS2020_e0688d13}, which optimize the {\groupdro} or IRM objectives using pseudo group labels, would not be comparable with $\mathcal{L}_{\mathrm{GDRO}}$ as they depend on a heuristic to obtain a single set of pseudo group labels.

\vspace{-0.05in}
\subsection{Reducing Constraint Set with Marginal Distribution Constraint}
\label{sec:method_marginal_distribution}
\vspace{-0.05in}
It is clear that the constraint set $\mathcal{C}$ plays an important role that connects {\worstoffdro} to {\groupdro}. Specifically, the {\worstoffdro} objective can be made a tighter bound to that of {\groupdro} by further constraining $\mathcal{C}$ so long as it contains the ground-truth group assignment $\{g_{i}^{\star}\}_{i=1}^{N}$. 
In the subsequent paragraph, we describe how we reduce the constraint set while retaining the ground-truth group assignment using a marginal distribution constraint. These constraints may be given as a side information or could be estimated from the small set of partial group labels under certain conditions.

Let $\mathcal{C}_{\mathbf{p}, \epsilon}$ is a subset of $\mathcal{C}$ whose elements $\{g_{i}\}_{i=1}^{N}$ satisfy the following condition:
\begin{gather}
g_{i}\,{=}\,g_{i}^{\star}, \forall i \leq K,\label{eq:constraint_labeled_data}\\
\vert\frac{1}{N}\sum\nolimits_{i=1}^{N} \mathbbm{1}\{g_{i}\,{=}\,j\}- \mathbf{p}_{j}\vert \leq \epsilon, \forall j \leq M,\label{eq:constraint_marginal}
\end{gather}
where \eqref{eq:constraint_labeled_data} implies that the true group labels are assigned whenever available, and \eqref{eq:constraint_marginal} implies that the data marginal distribution should be close to the marginal distribution $\mathbf{p}$. 
Then, for any marginal distribution $\mathbf{p}$ and $\epsilon\,{>}\,0$, it is easy to show $\mathcal{L}_{\mathrm{WDRO}}(\mathcal{C}_{\mathbf{p}, \epsilon})\,{\leq}\,\mathcal{L}_{\mathrm{WDRO}}(\mathcal{C})$ as $\mathcal{C}_{\mathbf{p}, \epsilon}\,{\subset}\,\mathcal{C}$.
Moreover, we will see in Lemma~\ref{lemma:proof_1} that, with high probability, the constraint set $\mathcal{C}_{\mathbf{p}^{\star}, \epsilon}$ with the true marginal distribution $\mathbf{p}^{\star}$ contains the true group assignment $\{g^{\star}\}$.
\begin{lemma}
\label{lemma:proof_1}
The constraint set $\mathcal{C}_{\mathbf{p}^{\star}, \epsilon}$ contains the true group labels $\{g_{i}^{\star}\}_{i=1}^{N}$ with high probability:
\begin{equation}
    P(\{g_{i}^{\star}\}_{i=1}^{N}\,{\in}\,\mathcal{C}_{\mathbf{p}^{\star}, \epsilon}) \geq 1\,{-}\,2e^{-2N\epsilon^{2}}\label{eq:proof_1}
\end{equation}
\end{lemma}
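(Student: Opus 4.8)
The plan is to observe that the true assignment $\{g_{i}^{\star}\}_{i=1}^{N}$ trivially satisfies the labeled-data constraint \eqref{eq:constraint_labeled_data}, since $g_{i}\,{=}\,g_{i}^{\star}$ holds by definition for all $i\leq K$. Hence membership of $\{g_{i}^{\star}\}$ in $\mathcal{C}_{\mathbf{p}^{\star}, \epsilon}$ is governed \emph{entirely} by the marginal constraint \eqref{eq:constraint_marginal}, and it suffices to lower bound the probability that the empirical group frequencies of the true labels lie within $\epsilon$ of the true marginal $\mathbf{p}^{\star}$.

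First I would fix a group index $j$ and set $X_{i}\,{=}\,\mathbbm{1}\{g_{i}^{\star}\,{=}\,j\}$. Treating the group labels as i.i.d.\ draws from the population distribution, the $X_{i}$ are independent Bernoulli random variables bounded in $[0,1]$ with mean $\mathbb{E}[X_{i}]\,{=}\,\mathbf{p}_{j}^{\star}$, and their empirical average $\frac{1}{N}\sum_{i=1}^{N} X_{i}$ is exactly the empirical frequency appearing in \eqref{eq:constraint_marginal}.

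The key step is a direct application of the two-sided Hoeffding inequality to these bounded independent variables, which yields
\begin{equation*}
P\Big(\big\vert\tfrac{1}{N}\textstyle\sum_{i=1}^{N} \mathbbm{1}\{g_{i}^{\star}\,{=}\,j\}- \mathbf{p}_{j}^{\star}\big\vert > \epsilon\Big) \leq 2e^{-2N\epsilon^{2}}.
\end{equation*}
Taking complements gives the stated bound for a single coordinate. I would note that in the binary-group case the deviations for $j\,{=}\,1$ and $j\,{=}\,2$ coincide in magnitude (the two empirical frequencies sum to one, as do $\mathbf{p}_1^{\star}$ and $\mathbf{p}_2^{\star}$), so controlling one coordinate controls both and no extra factor appears; this is what reproduces the clean $2e^{-2N\epsilon^{2}}$ figure. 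For general $M$ one would layer a union bound over the $M$ groups, inflating the failure probability to $2M e^{-2N\epsilon^{2}}$.

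The main obstacle is not the concentration argument itself---Hoeffding is immediate once the indicators are recognized as bounded i.i.d.\ Bernoullis. The real care lies in the modeling assumption that the group labels are i.i.d.\ samples from the true marginal $\mathbf{p}^{\star}$, so that $\mathbb{E}[\mathbbm{1}\{g_{i}^{\star}\,{=}\,j\}]\,{=}\,\mathbf{p}_{j}^{\star}$ and the empirical frequency is an unbiased estimator, and in matching the precise $2e^{-2N\epsilon^{2}}$ bound, which implicitly tracks a single group coordinate rather than a union over all $M$ groups.
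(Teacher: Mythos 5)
Your proposal is correct and follows essentially the same route as the paper: the labeled-data constraint \eqref{eq:constraint_labeled_data} is trivially satisfied by the true labels, so membership reduces to the marginal constraint \eqref{eq:constraint_marginal}, which is controlled by a two-sided Hoeffding bound on the empirical group frequencies $\frac{1}{N}\sum_{i=1}^{N}\mathbbm{1}\{g_{i}^{\star}=j\}$. You are in fact slightly more careful than the paper, whose proof equates the membership event with a single-coordinate deviation event and thus silently drops the quantifier over all $j\leq M$; your observation that the clean $2e^{-2N\epsilon^{2}}$ constant is exact only when $M=2$ (where the two deviations coincide) and should become $2Me^{-2N\epsilon^{2}}$ under a union bound for general $M$ is a genuine, if minor, sharpening of the paper's own argument.
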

The proof is in Appendix~\ref{sec:proof_of_lemma}. As in \eqref{eq:proof_1}, the probability of the constraint set containing the true group labels gets closer to 1 by allowing a larger variance ($\epsilon$) from the true marginal distribution. For fixed $\epsilon\,{>}\,0$, the probability gets closer to 1 as we increase the number of \emph{unlabeled} data ($N$).

Finally, this implies that $\mathcal{L}_{\mathrm{WDRO}}(\mathcal{C}_{\mathbf{p}^{\star}, \epsilon})$ is an upper bound to that of {\groupdro}:
\begin{center}
\begin{tcolorbox}[minipage,colback=gray!20,arc=0pt,outer arc=0pt,text width=3in]
\centering
{
$\mathcal{L}_{\mathrm{GDRO}} \operatorname*{\leq}\limits_{\mathrm{w.h.p}} \mathcal{L}_{\mathrm{WDRO}}(\mathcal{C}_{\mathbf{p}^{\star}, \epsilon}) \leq \mathcal{L}_{\mathrm{WDRO}}(\mathcal{C})$
}
\end{tcolorbox}
\end{center}
In practice, however, the true marginal distribution $\mathbf{p}^{\star}$ may not be available. For our setting where group labels are partially available, with an assumption that group labels are missing completely at random (MCAR) \cite{rubin1976inference}, the true marginal distribution could be estimated from the subset of data with group labels. This again allows us to formulate a constraint set that contains the ground-truth group assignment with high probability.

To be more specific, let $\bar{\mathbf{p}}$ be the estimate of the marginal distribution from $\{(x_i,y_i,g_i^{\star})\}_{i=1}^{K}$. 

\begin{lemma}
\label{lemma:proof_2}
The constraint set $\mathcal{C}_{\bar{\mathbf{p}}, \delta+\epsilon}$ contains the true group labels $\{g^{\star}_{i}\}_{i=1}^{N}$ with high probability:
\begin{equation}
    P(\{g_{i}^{\star}\}_{i=1}^{N}\,{\in}\,\mathcal{C}_{\bar{\mathbf{p}}, \delta+\epsilon}) \geq 1\,{-}\,2e^{-2N\epsilon^{2}}\,{-}\,2e^{-2K\delta^{2}}\label{eq:proof_2}
\end{equation}
\end{lemma}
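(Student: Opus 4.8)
The plan is to reduce Lemma~\ref{lemma:proof_2} to Lemma~\ref{lemma:proof_1} via a triangle inequality together with a second, parallel concentration bound for the estimate $\bar{\mathbf{p}}$. The key observation is that membership of $\{g_i^\star\}_{i=1}^N$ in $\mathcal{C}_{\bar{\mathbf{p}},\,\delta+\epsilon}$ reduces to a single condition: for every group $j\le M$, the true $N$-sample empirical marginal $\hat{p}_j := \frac{1}{N}\sum_{i=1}^N \mathbbm{1}\{g_i^\star = j\}$ must lie within $\delta+\epsilon$ of $\bar{\mathbf{p}}_j$. The labeled-data constraint \eqref{eq:constraint_labeled_data} is automatically satisfied because we are plugging in the true labels themselves, so only the marginal constraint \eqref{eq:constraint_marginal} needs work.

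First I would split the deviation of interest through the true marginal $\mathbf{p}^\star$:
$$|\hat{p}_j - \bar{\mathbf{p}}_j| \le |\hat{p}_j - \mathbf{p}_j^\star| + |\mathbf{p}_j^\star - \bar{\mathbf{p}}_j|.$$
The first term on the right is precisely the quantity already controlled in Lemma~\ref{lemma:proof_1}: the event that $|\hat{p}_j - \mathbf{p}_j^\star|\le\epsilon$ holds for all $j$ with probability at least $1-2e^{-2N\epsilon^2}$, so I would simply invoke that lemma.

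Next I would bound the second term. Under the MCAR assumption the $K$ labeled group labels form an i.i.d. sample from the same distribution, so $\bar{\mathbf{p}}_j = \frac{1}{K}\sum_{i=1}^K \mathbbm{1}\{g_i^\star = j\}$ is an average of bounded $\{0,1\}$ indicators with mean $\mathbf{p}_j^\star$. Applying Hoeffding's inequality exactly as in the proof of Lemma~\ref{lemma:proof_1}, but now with $K$ samples and tolerance $\delta$, gives $|\mathbf{p}_j^\star - \bar{\mathbf{p}}_j|\le\delta$ for all $j$ with probability at least $1-2e^{-2K\delta^2}$.

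Finally, on the intersection of these two events the triangle inequality yields $|\hat{p}_j - \bar{\mathbf{p}}_j|\le \delta+\epsilon$ for every $j$, i.e. $\{g_i^\star\}_{i=1}^N\in\mathcal{C}_{\bar{\mathbf{p}},\,\delta+\epsilon}$, and a union bound over the two failure events produces the stated $1-2e^{-2N\epsilon^2}-2e^{-2K\delta^2}$. I do not anticipate a genuine obstacle, since this is a routine concentration argument; the one point demanding care is that the two events are \emph{dependent} (the first $K$ indicators appear in both $\hat{p}_j$ and $\bar{\mathbf{p}}_j$), so I would rely on the union bound, which requires no independence, rather than multiplying the two success probabilities. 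The MCAR assumption is the load-bearing modeling choice here: it is what makes $\bar{\mathbf{p}}$ an unbiased i.i.d.\ estimate of $\mathbf{p}^\star$ and thereby legitimizes the second Hoeffding step.
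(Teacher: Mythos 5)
Your proposal is correct and takes essentially the same route as the paper's own proof: both pass through the true marginal $\mathbf{p}^{\star}$ via the triangle inequality, apply Hoeffding's inequality separately to the $N$-sample empirical marginal (tolerance $\epsilon$) and to the $K$-sample estimate $\bar{\mathbf{p}}$ (tolerance $\delta$), and combine the two events with a union bound (the paper writes it as $P(A\cap B)\ge P(A)+P(B)-1$, which likewise needs no independence). The only cosmetic difference is that you invoke Lemma~\ref{lemma:proof_1} for the $N$-sample term where the paper re-derives it inline.
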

We provide a proof in Appendix~\ref{sec:proof_of_lemma}. Here, $\delta$ is introduced to take into account the estimation error of the true marginal distribution $\mathbf{p}^{\star}$. When $K$, the number of labeled data, is large, the bound in \eqref{eq:proof_2} is close to 1. 

\begin{center}
\begin{tcolorbox}[minipage,colback=gray!20,arc=0pt,outer arc=0pt,text width=4in]
\centering
{
Under MCAR, for large $K$:
$\mathcal{L}_{\mathrm{GDRO}} \operatorname*{\leq}\limits_{\mathrm{w.h.p}} \mathcal{L}_{\mathrm{WDRO}}(\mathcal{C}_{\bar{\mathbf{p}}, \delta+\epsilon})$
}
\end{tcolorbox}
\end{center}

\begin{algorithm}[t]
	\caption{{\worstoffdro} Algorithm}
	\label{alg:worstoffgropudro_alg}
	\begin{algorithmic}[1]
	    \STATE {\em Input:} Fully-labelled dataset $\{(x_{i}, y_{i}, g_{i}^{\star})\}_{i=1}^{K}$, task-labeled dataset $\{(x_{i}, y_{i}, -)\}_{i=K+1}^{N}$
		\STATE {\em Initialization:} learning rates $\eta_w$ and $\eta_q$, Marginal distribution $\bar{\mathbf{p}}$, $\epsilon$
		\STATE {\em Parameters:} Group Weights $q_j$, {\worstoffdro}  group assignments $\hat{g}$,\\ Neural network parameter $w$
		\FOR{$t = 0, 1, 2, ...,T$}
		\STATE $\{\hat{g}^{t}\} \leftarrow \max_{\{\hat{g}\}\in\mathcal{C}_{\bar{\mathbf{p}},\epsilon}} \sum_{j=1}^{M} q^{t}_j \frac{\sum_i \hat g_{ij} l(x_i, y_i; w^{t})}{\sum_i \hat g_{ij}}$ where, $\mathcal{C}_{\bar{\mathbf{p}},\epsilon}$ as defined in \eqref{eq:worstoffdro_constraint_set_with_marginal}.  
		\STATE  Gradient descent on $w$:\\
		        $w^{t+1} \leftarrow w^{t} - \eta_w \mathlarger{\nabla_w} \sum_{j=1}^{M} q^{t}_j \frac{\sum_i \hat g^{t}_{ij } l(x_i, y_i;w)}{\sum_i \hat g^{t}_{ij}}$
		\STATE Exponential ascent on $q$:\\
		       $q^{t+1} \leftarrow q^{t} \exp{(\eta_q \mathlarger{\nabla_q} \sum_{j=1}^{M} q_j \frac{\sum_i \hat g^{t}_{ij} l(x_i, y_i;w^{t+1})}{\sum_i \hat g^{t}_{ij}})} $
		\ENDFOR
		\STATE {\em Output:} Trained neural network parameters $w^T$
	\end{algorithmic}
\end{algorithm}

\vspace{-0.05in}
\subsection{A Practical Optimization Algorithm}
\label{sec:method_optimization}
\vspace{-0.05in}
We are interested in solving the optimization problem $\mathcal{L}_{\mathrm{WDRO}}(\mathcal{C}_{\mathbf{p},\epsilon})$. Unfortunately, the inner maximization problem with respect to the group assignments $\{\hat{g}\}$ in \eqref{eq:worstoffdro_nocon} is challenging as variables are discrete and the objective cannot be decomposed due to the marginal distribution constraint.
In this section, we describe an optimization recipe with a few approximations. 

First, we propose to use a soft group assignments. This not only converts the problem into continuous optimization problem, but also accommodates inherent uncertainties in group assignment for data with unlabelled group labels. Specifically, for each sample, we retain a soft group assignment $\hat{g}_{i}\,{\in}\,\Delta^{M}$, and optimize the {\worstoffdro} objective over the constraint set $\mathcal{C}_{\bar p, \epsilon}$ as defined below:
%
\begin{equation}
    \min_{w}\max_{q\in\Delta^{M}} \max_{\{\hat{g}\}\in\mathcal{C}_{\mathbf{p},\epsilon}}\sum_{j=1}^{M} q_{j} \Big[\frac{\sum_{i=1}^{N} \hat{g}_{ij} l(x_{i}, y_{i}, w)}{\sum_{i=1}^{N} \hat{g}_{ij}}\Big]\label{eq:worstoffdro_soft}
\end{equation}
where the constraint set is defined as:
\begin{equation}
\mathcal{C}_{\mathbf{\bar p},\epsilon}\,{=}\,\Bigg\{\{\hat{g}_{i}\}_{i=1}^{N}\Big\vert\begin{array}{l}
        \;\hat{g}_{i}\,{\in}\,\Delta^{M},\forall i\,{\leq}\,N, \\
        \;\hat{g}_{i(g_{i}^{\star})}\,{=}\,1, \forall i\,{\leq}\,K,\\
        \;\vert\frac{1}{N}\sum_{i=1}^{N}\hat{g}_{ij}\,{-}\,\mathbf{\bar p}_{j}\vert\,{\leq}\,\epsilon, \forall j\,{\leq}\,M
    \end{array}\Bigg\}\label{eq:worstoffdro_constraint_set_with_marginal}
\end{equation}
%

The first condition ensures assignments in the probability simplex, second one ensures assignments are consistent with ground-truth for labeled data, and the third one validates the data marginal distribution follows the provided distribution. The third constraint also provides for a mitigation strategy when $\mathbf{\bar p}$ is misspecified (likely when data is not MCAR). The $\epsilon$ in the third constraint is a hyper-parameter. Increasing $\epsilon$ provides more flexibility with the choice of assignments (more details in Appendix~\ref{sec:increase_set}).

We alternate optimization over $w$, $q$ and $\{\hat{g}\}$ as shown in Algorithm~\ref{alg:worstoffgropudro_alg}. That is, we first solve inner maximization over $\{\hat{g}\}$, and conduct gradient descent on $w$ and the exponential gradient ascent on $q$, and iterate. An exponential ascent on $q$ achieves smaller losses for linear predictors (like $q$) \citep{kivinen1997exponentiated}. The inner maximization over $\{\hat{g}\}$ is solved using off-the-shelf CVXPY solver~\citep{diamond2016cvxpy} (more details in Appendix~\ref{sec:opti_notes}). 

Next, let us see how the worst-off assignments computed by the algorithm look to be. For simplicity, consider the case $\epsilon=0$ and $K=0$, (i.e., no labelled groups). Denoting $\frac{q_j}{\sum_i^N \hat g_{ij}} = \frac{N q_j}{\mathbf{\bar p}_{j}}$ in \eqref{eq:worstoffdro_soft} as $\theta_j$ and $l(x_i, y_i, w)$ as $l_i$, we can re-write the maximization over $\{\hat{g}_{i j}\}$ as,
\begin{equation}
 \max_{\{\hat{g}_{ij}\}\in\mathcal{C}_{\mathbf{p},\epsilon=0}}\sum_{i=1, j=1}^{N, M} \hat{g}_{ij} \times \theta_j \times l_i \label{eq:worstoffdro_rewrite}
\end{equation}
The constraints ensure that $\sum_{i=1}^{N}\hat{g}_{ij}=N\mathbf{\bar p}_{j}$ and   $\sum_{i=j}^{M}\hat{g}_{ij} = 1, \hat{g}_{ij}\ge 0$ for all $i\le N$ and $j \le M$ respectively. The linear program \eqref{eq:worstoffdro_rewrite} sets the highest mass on $\hat{g}_{ij}$ for $i$ and $j$ that maximize $\theta_j$ and sample loss $l_i$. A large $\theta_j$ represents groups with a high group weight $q_j$ and low marginal probability $\mathbf{\bar p}_{j}$, characteristic of a worst-off group. In summary, we find that \textit{high loss samples are assigned to groups with high group weights and low marginal probabilities}. We provide a detailed example of this observation in the Appendix~\ref{sec:worstoffdro_example} and discuss the case where marginal constraints are ignored. 
%

\section{Experiments}
\label{sec:exp}
%
We test the efficacy of our method on image and tabular datasets, each of which consists of samples from mutually exclusive groups or environments. These groups are indicative of the background or an RGB identification for image datasets, and attributes such as gender or race for tabular datasets. As discussed in Section~\ref{sec:intro}, one or more of the available groups form a minority in terms of the sample size and demographics. The presence of minority groups results in a possible scenario where the aggregate performance is (falsely) remarkable, because evaluations are dominated by larger groups, even though the performance on the minority groups is poor. In our experiments we assume group numbers to be known but group labels are \textit{missing completely at random} at a fixed rate at train time.  

In Section~\ref{sec:exp_setting}, we outline our baselines for comparison, our metrics of evaluation and the model selection strategy. In Section~\ref{sec:exp_results}, we describe each dataset in detail and highlight the differences across the groups within the dataset. All the quantitative results are available in Table~\ref{tab:quant_results} and per-group summary statistics are present in Table~\ref{tab:samples}. More analysis of our method is provided in Section~\ref{sec:exp_ablation}.

\begin{table}[!t]
	\centering
	\resizebox{0.95\columnwidth}{!}{%
		\begin{tabular}{c c c c c c c}
		\toprule
			Dataset   & \# Labeled            & \# UnLabeled            &  Total samples                      & \# Groups & \# Minority Samples & \# Majority Samples \\ \midrule\midrule
			Waterbirds      & $508$ & $4287$ & $4795$ & $4$ & $5$ & $113$\\
			CMNIST  & $3983$ & $35657$ & $39640$         & $3$ & $276$ & $2138$ \\
			Adult & $1308$ & $11635$ & $12943$      & $4$ & $63$& $823$ \\
			CelebA & $8030$ & $154740$ & $162770$  & $4$ & $62$ & $3547$ \\ 
		\bottomrule
		\end{tabular}%
	}
	\caption{\label{tab:samples} \small \textbf{Dataset description.} We show sample counts in labelled and unlabelled training sets, as well as counts for majority and minority groups. The number of labelled samples are about $10\%$ of total samples.}
	\vspace{-0.25in}
\end{table}

\vspace{-.1in}
\subsection{Experimental Settings}
\label{sec:exp_setting}
\vspace{-.05in}

{\bf Baselines.}
 We contrast the performance of our method with respect to a few well-known baselines. \begin{enumerate}[topsep=0pt,itemsep=-1ex,partopsep=1ex,parsep=1ex,align=left]
 \item \textbf{ERM:} Empirical Risk Minimization that optimizes aggregate average loss over all the samples in the training dataset. 
 \item \textbf{{\unsupdro} \citep{hashimoto2018fairness}:} Samples with losses exceeding a threshold $\eta$ are considered as a group whose average loss is optimized. Since the method doesn't require group labels, it is an unsupervised algorithm. The method {\unsupdro}, similar to CVaR DRO \cite{levy2020large}, requires a wider hyper-parameter search relative to the other baselines. More details in Appendix~\ref{sec:unsupdro_details}.
 \item \textbf{{\groupdro} \citep{sagawa2019distributionally}:} A method that optimizes the Rawlsian criterion by assigning simplex weights to the groups. 
 The group labels for individual samples are assumed to be available, hence this method is fully-supervised in terms of the group label. 
 \item \textbf{{\groupdro} (Partial):} We consider another variant of {\groupdro} that only uses samples with group labels at train time. We call the method \textit{{\groupdro} (Partial)}, to contrast with the above baseline, \textit{{\groupdro} (Oracle)}.
 \end{enumerate}
 
 We compare above methods with our proposal, \textbf{{\worstoffdro}}. Note that our approach requires marginal probabilities as an input to the algorithm, which are computed from the training dataset in our experiments. All our baselines for experiments are single-stage approaches similar to our method {\worstoffdro}. We provide a comparison to two-stage methods in Section~\ref{sec:twostage}.  For baselines and {\worstoffdro} implementations, samples are drawn randomly for every batch ensuring an unbiased comparison to the ERM baseline. This is unlike \citep{sagawa2019distributionally} who adopt a weighted sampling procedure which could be noisy when group labels are uncertain or missing as in our problem. \footnote{Minor differences, in the accuracies of the {\groupdro} baseline to those reported in \citep{sagawa2019distributionally}, are due to the random sampling scheme of samples during batch-wise updates. }
 
{\bf Evaluation Metrics.}
We set aside a test set whose group labels are fully available. Since all of our datasets characterize a classification task, we evaluate overall accuracies and per-group accuracies in our experiments. Specifically, we highlight the accuracy of the minority group (\textbf{min}) together with the overall (\textbf{avg}) accuracy where individual samples are equally weighted regardless of their group.

{\bf Model Selection.}
Model selection plays a crucial role when distributional differences are observed in a dataset \citep{gulrajani2021in}. In our problem setting, individual groups may differ from each other in the joint distribution over the data and the label space, however, the testing set resembles the training set. That is, there are no out of distribution samples and the focus is to improve robustness over a predefined set of groups common to both training and testing datasets. Consequently, among the recommendations made in \citep{gulrajani2021in}, a training domain validation set is a feasible strategy for our problem. In our algorithm, learning rate and weight decay are important hyper-parameters. Prior works \citep{sagawa2019distributionally,sagawa2020investigation} noted that ERM fails to optimize for the minority group's performance under high regularization regime thus necessitating an alternative. Hence, we compare our methods in this regime. We also tune for algorithmic specific hyper-parameters for each baseline. These hyper-parameters are the loss threshold in {\unsupdro} ($\eta_\text{UDRO}$) and the step size for the group weights in {\groupdro} ($\eta_\text{GDRO}$) and in {\worstoffdro} ($\eta_\text{WDRO}$). A list of all hyper-parameter choices used in the experiments is provided in the Appendix~\ref{sec:app_hparam_tuning}. All the numbers reported in the paper were averaged over three random seeds. 

We adopt \textbf{NVP} (novel validation procedure) \citep{NEURIPS2018_donini} in our experiments. In this procedure, we first search for hyper-parameters with the best overall accuracy. Then, from the top five best performing hyper-parameters, we select the model that achieves the highest minority group accuracy. Such a procedure offers robustness to hyper-parameters in the reported numbers. 

\vspace{-0.05in}
\subsection{Quantitative Results}
\label{sec:exp_results}
\vspace{-0.05in}

We describe key results on four datasets\footnote{CMNIST and Adult datasets differ from their previous instantiations in \cite{creager2021environment}. These datasets are used to assess group robustness (see Sec~\ref{sec:related}), hence same set of pre-defined groups are used in the training and testing phases. }, Waterbirds~\citep{sagawa2019distributionally}, Group CMNIST~\citep{arjovsky2019invariant}, Group Adult~\citep{dua2017uci}, and CelebA~\citep{liu2015deep}.

\vspace{-0.05in}
\subsubsection{Waterbirds Dataset}
\label{sec:exp_results_waterbirds}
\vspace{-0.05in}

The dateset, used in \citep{sagawa2019distributionally}, comprises of $4795$ images of birds from the CUB dataset \citep{welinder2010caltech} and the backgrounds taken from the Places dataset \citep{zhou2017places}. Each image in the dataset has a background of land or water. The target labels are either ``landbirds'' or ``waterbirds''. The authors in \citep{sagawa2019distributionally} create four groups with each target label and a background class considered as a group. In this dataset the groups ``landbirds'' on water and ``waterbirds'' on land form a minority. Our results in Table~\ref{tab:quant_results} firstly shows that the ERM method attains a small minority group accuracy of $60\%$. All the invariant learning baselines, except for {\groupdro} (Partial), improve the minority group's accuracy. Next, we observe that in comparison to {\groupdro} (Partial), our proposed {\worstoffdro} improves the minority group's performance by a significant margin of $21\%$. Due to this improvement, the all-group accuracy also improves by $8\%$. Minority group's performance on fully-supervised method {\groupdro} (Oracle) is at $83\%$ accuracy with a window of $18\%$ difference from {\worstoffdro}. Lastly, Appendix~\ref{sec:increase_set} describes experiments where the constraint set size is gradually increased by varying the $\epsilon$ parameter of $\mathcal{C}_{\mathbf{\bar p},\epsilon}$. Increasing $\epsilon$ parameter accommodates for the setting where $\mathbf{\bar p}$ is misspecified.

\begin{table*}[t] 
    \centering
	\footnotesize
	\begin{tabular*}{0.95\textwidth}{l @{\extracolsep{\fill}}
			*{16}{S[table-format=3.0]}}
		\toprule
		& \multicolumn{2}{c}{Waterbirds} & \multicolumn{2}{c}{CMNIST} & \multicolumn{2}{c}{Adult} & \multicolumn{2}{c}{CelebA}\\
		\cmidrule{2-3} \cmidrule{4-5} \cmidrule{6-7} \cmidrule{8-9} 
		& {\cellcolor{gray!25}\textbf{min}}  & {\textbf{avg}} & {\cellcolor{gray!25}\textbf{min}} & {\textbf{avg}} & {\cellcolor{gray!25}\textbf{min}} &  {\textbf{avg}} & {\cellcolor{gray!25}\textbf{min}} &  {\textbf{avg}}\\ 
		\midrule\midrule
		{\groupdro} (Oracle)   & \cellcolor{gray!25}{$83$} &  {$92$}  &  {\cellcolor{gray!25}$50$}  &  {$75$}  &  {\cellcolor{gray!25}$82$}  &  {$88$}  & {\cellcolor{gray!25}$80$}  &  {$94$}  \\ \midrule
		ERM                    & {\cellcolor{gray!25}$60$} &  {$87$}  &  {\cellcolor{gray!25}$13$}  &  {$79$}  &  {\cellcolor{gray!25}$68$}  &  {$92$}  & {\cellcolor{gray!25}$45$}  &  {$95$}  \\
		{\unsupdro}            & {\cellcolor{gray!25}$65$} &  {$88$}  &  {\cellcolor{gray!25}$10$}  &  {$80$}  &  {\cellcolor{gray!25}$68$}  &  {$92$}  & {\cellcolor{gray!25}$39$}  &  {$96$}  \\
		{\groupdro} (Partial)  & {\cellcolor{gray!25}$44$} &  {$81$}  &  {\cellcolor{gray!25}$36$}  &  {$76$}  &  {\cellcolor{gray!25}$67$}  &  {$90$}  &   {\cellcolor{gray!25}$40$}  &  {$95$} \\ 
		{\worstoffdro}         & {\cellcolor{gray!25}$\bm{65}$} &  {${89}$}  &  {\cellcolor{gray!25}{$\bm{39}$}}  &  {$77$}  &  {\cellcolor{gray!25}$\bm{71}$}  &  {$91$}  &   {\cellcolor{gray!25}$\bm{49}$}  &  {$95$}    \\ 	
		\bottomrule
	\end{tabular*} 
	\vspace{-0.05in}
	\caption{ \label{tab:quant_results} \small \textbf{Quantitative Results.} For baselines, we consider an ERM, {\unsupdro} \citep{hashimoto2018fairness}, {\groupdro} (Partial) for partly labelled {\groupdro} \citep{sagawa2019distributionally} method, {\groupdro} (Oracle) for the fully supervised model. Our method {\worstoffdro} improves the minority group's accuracy (\textbf{min}) while maintaining a similar overall accuracy (\textbf{avg}) relative to baselines. The accuracies are computed on the test set and are an average over three random runs. The standard deviations are provided in the Appendix Table~\ref{tab:quant_results_std}.} 
	\vspace{-0.2in}
\end{table*} 

\vspace{-0.05in}
\subsubsection{Group CMNIST Dataset}
\label{sec:exp_results_cmnist}
\vspace{-0.05in}

CMNIST, derived from an MNIST~\citep{lecun1998gradient}, is a digit recognition dataset where each image is colored either red or green. Digits $<5 / \ge 5$ are considered as label $0 / 1$. We consider three groups in our experiments. In the first two groups, label $0$ images are predominantly colored red and vice versa. In the third group, which forms a minority, we switch coloring such that the label $1$ images are predominantly colored red. Specifically, for the first two groups, the color id is sampled by flipping the target label with probabilities $0.2$ and $0.1$ respectively, while the third group with probability $0.9$. Both training and testing sets contain three groups. The overall setup for generating a given group is similar to \citep{arjovsky2019invariant}. We show the results on CMNIST in Table~\ref{tab:quant_results}. Similar to the Waterbirds dataset, {\worstoffdro} improves the minority group's accuracy compared to the ERM method. Relative to {\groupdro} (Partial), {\worstoffdro} improves the accuracy of the minority group by $4\%$ and $1\%$ in the overall accuracy. The margin between {\worstoffdro} and {\groupdro} (Oracle) is $11\%$. Among all the baseline, {\unsupdro} attains lowest minority group accuracy of $10\%$. A large trade-off between the minority group accuracy and the all-group accuracy was seen for {\unsupdro} in this dataset. 

\vspace{-0.05in}
\subsubsection{Group Adult Dataset}
\label{sec:exp_results_adult}
\vspace{-0.05in}

We use a semi-synthetic version of the Adult dataset \citep{dua2017uci} for this experiment. Similar to \citep{NEURIPS2020_07fc15c9}, we consider race and sex as the four demographic groups. The target label is income $> 50K\$$ and is treated as label $1$. Similar to the CMNIST dataset, each group has a different correlation strength to the target label. For the purposes of the experiment, we exaggerate these spurious correlations caused by group membership close to \citep{creager2021environment}. Particularly, for samples with group label as Afican-American, we undersample examples with probability $P(y=1\mid group)=0.06$ whereas for the non African-American group labels, we oversample examples with probability $P(y=1\mid group)=0.94$. Table~\ref{tab:quant_results} indicates a $5\%$ improvement in the minority group's accuracy while maintaining the similar overall accuracy of $90\%$ compared to {\groupdro} (Partial). The {\groupdro} (Oracle) method reaches an accuracy of $82\%$ for the minority group compared to {\worstoffdro} which achieves $71\%$. Evidently, ERM underperforms in terms of the minority group's accuracy and attains about $68\%$ accuracy. 

\vspace{-0.05in}
\subsubsection{CelebA Dataset}
\label{sec:exp_results_celeba}
\vspace{-0.05in}

CelebA \citep{liu2015deep} is a dataset containing about 200k celebrity faces curated from the internet. There are $40$ labels available in this dataset which are annotated by a group of paid adult participants \citep{bohlen2017server}. Similar to \citep{sagawa2019distributionally}, we aim to predict the target attribute \textit{Blond Hair} that is spuriously correlated to the Gender attribute. Specifically, having blond hair correlates with the female attribute. The minority group in this dataset are the images with attributes (blond, male). The proportion of samples in the minority and the majority group is show in Table~\ref{tab:samples}. The quantiative results in Table~\ref{tab:quant_results} indicate an improvement of $9\%$ over the {\groupdro} (Partial) method for the proposed {\worstoffdro} algorithm. The {\groupdro} (Oracle) method achieves the highest minority group accuracy of $80\%$. The minority group performance for the ERM method, with an accuracy of $45\%$, is comparable to {\groupdro} (Partial). 
All the methods are similar in terms of the average group accuracy with values $>90\%$.

\vspace{-0.05in}
\subsection{Ablation Studies}
\label{sec:exp_ablation}
\vspace{-0.05in}
In this section, we discuss different components of our algorithm that influences it's performance.

\vspace{-0.05in}
\subsubsection{Increasing the labelled samples.}
\label{sec:exp_ablation_labeled_samples}
\vspace{-0.05in}
Recall that for the quantitative results in Table~\ref{tab:quant_results}, the number of labelled samples were around $10\%$ of the total training samples. In this section, we investigate the effects of increasing the number of labelled samples provided to the training algorithm. Although obtaining annotations for groups is an arduous task \citep{NEURIPS2020_07fc15c9}, having more labelled groups provides two benefits for the algorithm. Firstly, the standard deviation of errors in estimating the marginal probabilities from the labelled portion of the data reduces \citep{wasserman2004all} ( $\approx \sqrt{\text{\# samples}}$ rate). Secondly, labelled groups reinforce an accurate evaluation of the Rawlsian objective in \eqref{eq:groupdro_reform} and appropriate weight updates for the groups. The results shown in Figure~\ref{fig:labpercent_minority} depict the minority group accuracy at different labelled percent thresholds. The corresponding plots for average group accuracies are provided in the Appendix Figure ~\ref{fig:labpercent_average}. {\worstoffdro} method is compared with {\groupdro} (Partial). It is observed that for both the methods, the minority group's accuracy increases with more labelled data. Furthermore, the accuracy values for {\worstoffdro} method are better than {\groupdro} at several thresholds. The methods converge at a threshold specific to the datasets. Increasing the labelled counts beyond such a threshold saturates the {\worstoffdro} performance, however, {\groupdro} (Partial) consistently improves until Oracle performance is attained.

\vspace{-0.05in}
\subsubsection{Minority Group vs. Overall Accuracy.}
\label{sec:exp_ablation_minor_vs_overall}
\vspace{-0.05in}

As discussed in Section~\ref{sec:intro}, several groups in the training dataset, especially the minority groups, could be distributionally different from the majority group samples. Consequently, a mild tradeoff surfaces between the minority group accuracy values and the aggregate accuracies. Addressing this issue, recall that we leverage a robust model selection criterion such as NVP (see Section~\ref{sec:exp}, model selection paragraph) that balances both the minority and aggregate group accuracies. We extend the results in this section, by plotting evaluations at different hyper-parameter choices for our algorithm. Figure~\ref{fig:pareto} contrasts minority group and overall accuracy across all the datasets. Evidently, the top-right corners are desirable regions for the models to be present with maximum performance across both the metrics. The Adult dataset in Figure~\ref{fig:pareto} shows a clear envelope on the {\worstoffdro} models that surpass the corresponding {\groupdro} models. Similar trend exists on the remaining datasets with more {\worstoffdro} models concentrated in the top-right corner.

\begin{figure*}[t]
    \centering
	\begin{subfigure}[b]{0.24\linewidth}
	\includegraphics[width=\linewidth]{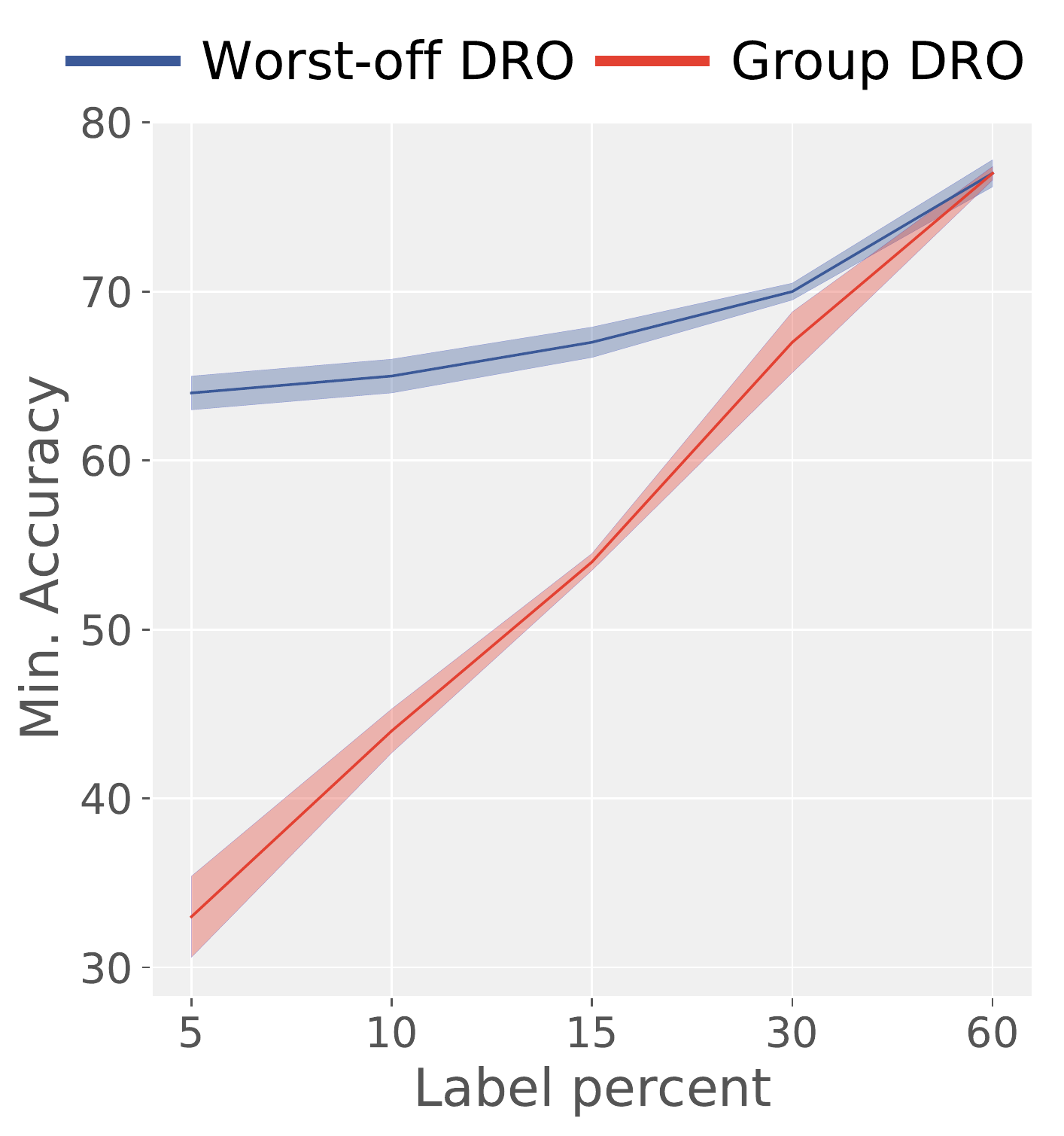}
	\caption{Waterbirds}
	\end{subfigure} 
	\begin{subfigure}[b]{0.24\linewidth}
	\includegraphics[width=\linewidth]{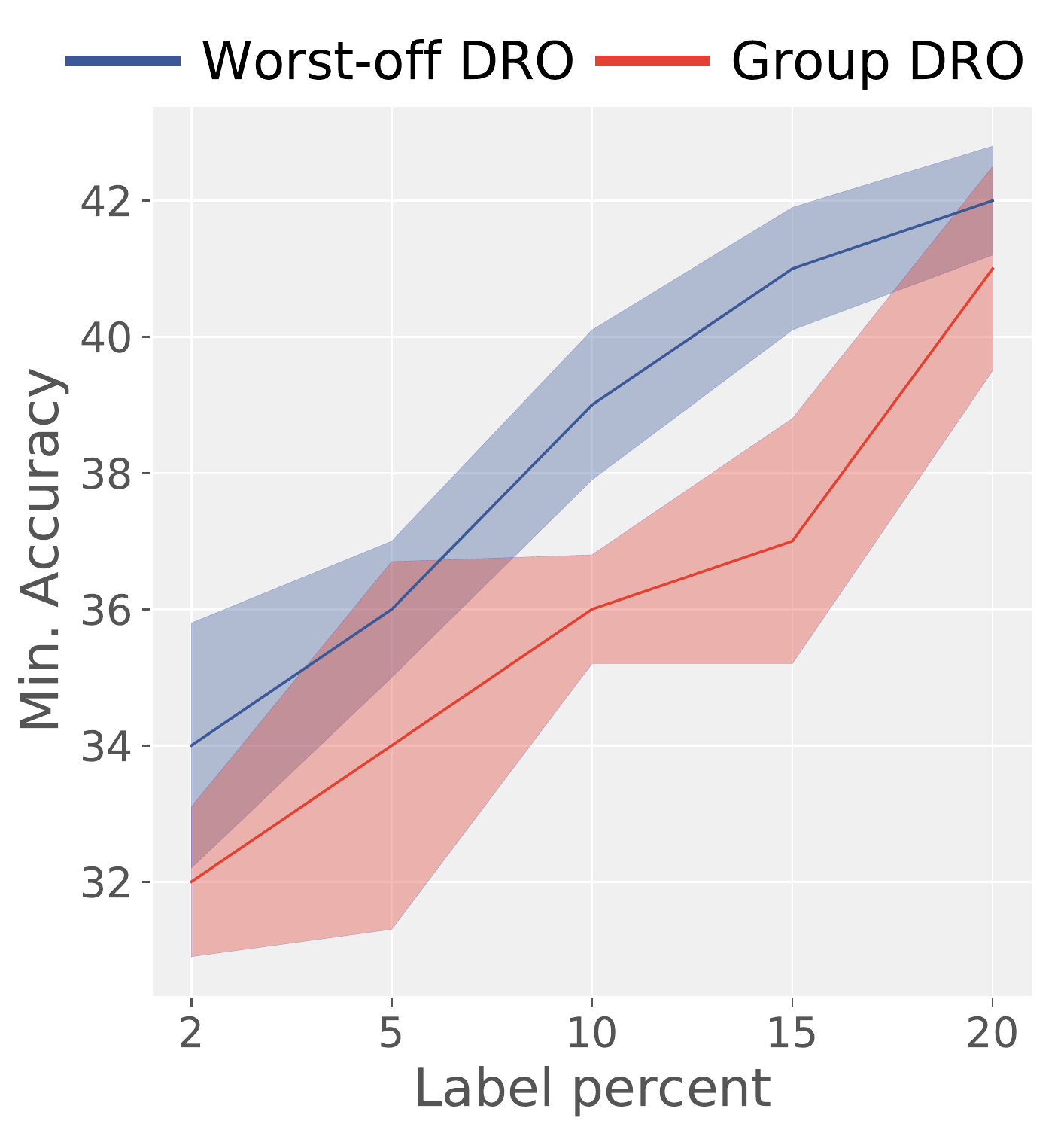}
	\caption{CMNIST}
	\end{subfigure}
	\begin{subfigure}[b]{0.24\linewidth}
	\includegraphics[width=\linewidth]{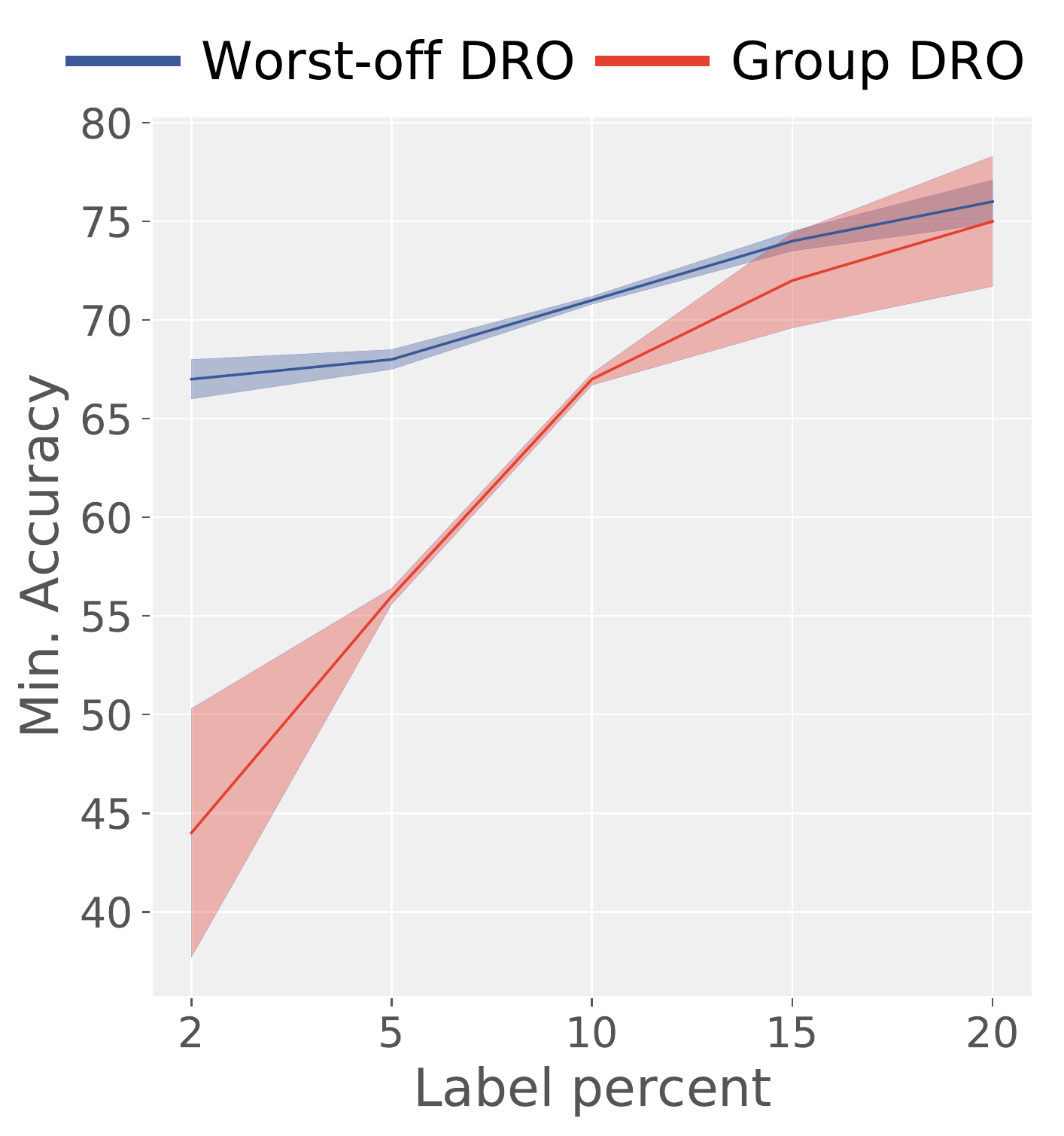}
	\caption{Adult}
	\end{subfigure}
	\begin{subfigure}[b]{0.24\linewidth}
	\includegraphics[width=\linewidth]{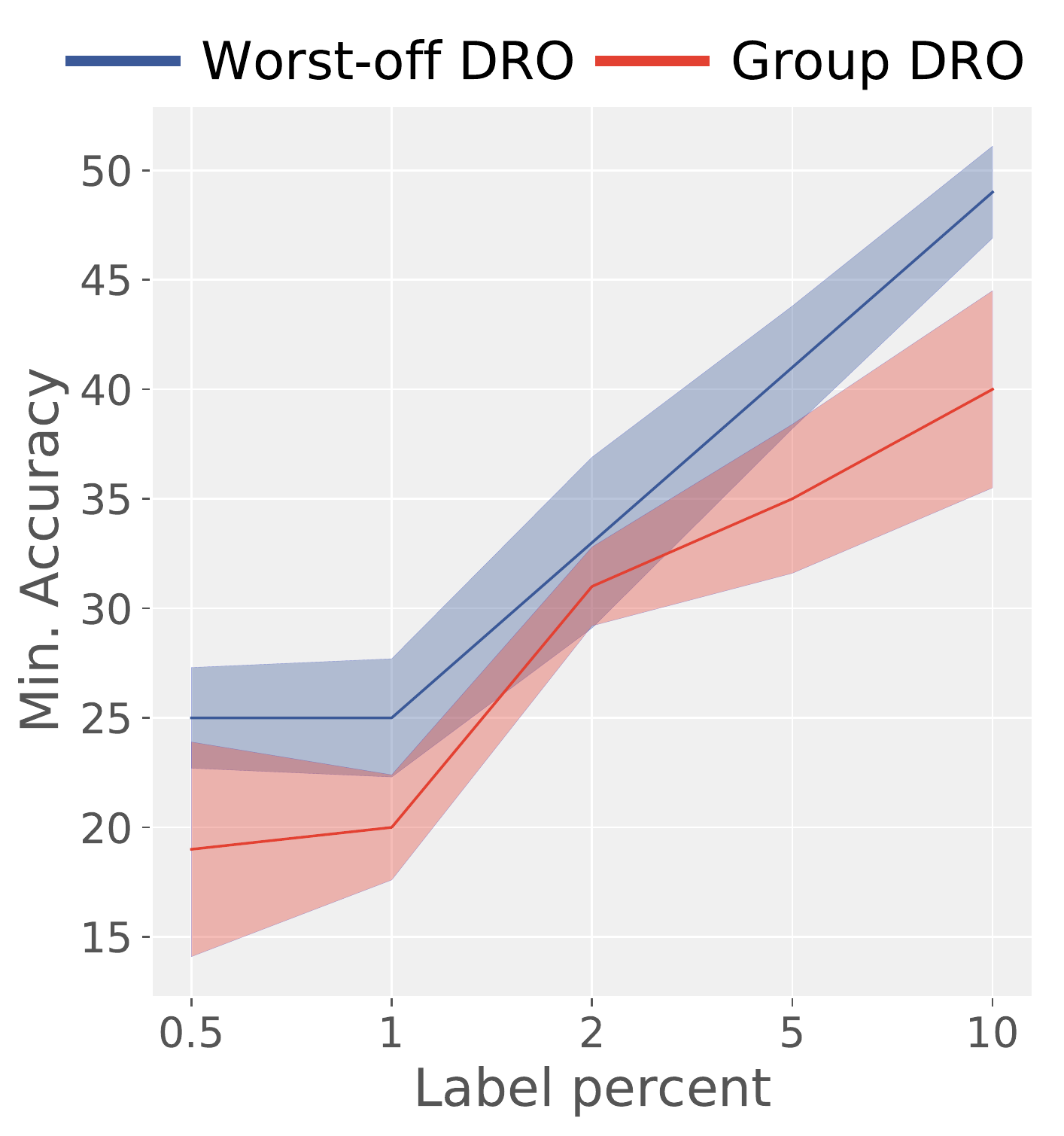}
	\caption{CelebA}
	\end{subfigure}%
	\vspace{-0.05in}
	\caption{ \textbf{Increasing the labelled samples.}  Minority group accuracies are plotted at different counts of the labelled samples in the training dataset. Both, {\groupdro} (Partial) and {\worstoffdro} algorithms improve the minority group accuracies with more training labels. Also, the {\worstoffdro} method has higher accuracy values than {\groupdro} method. The aggregate group accuracies are shown in the Appendix Figure ~\ref{fig:labpercent_average}. }
	\label{fig:labpercent_minority}
	\vspace{-0.2in}
\end{figure*}
\begin{figure*}[!t]
    \centering
	\begin{subfigure}[b]{0.24\linewidth}
	\includegraphics[width=\linewidth]{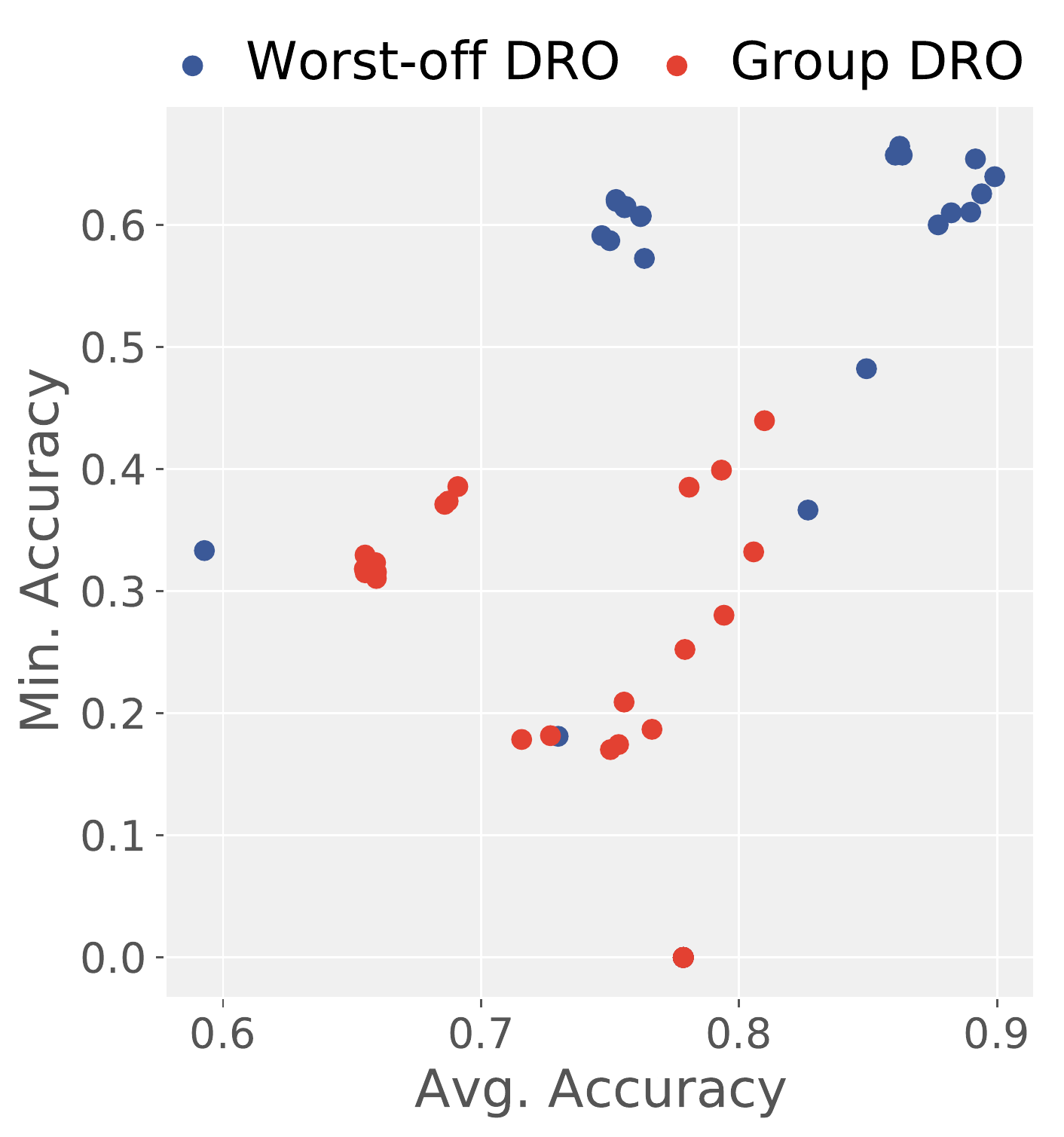}
	\caption{Waterbirds}
	\end{subfigure} 
	\begin{subfigure}[b]{0.24\linewidth}
	\includegraphics[width=\linewidth]{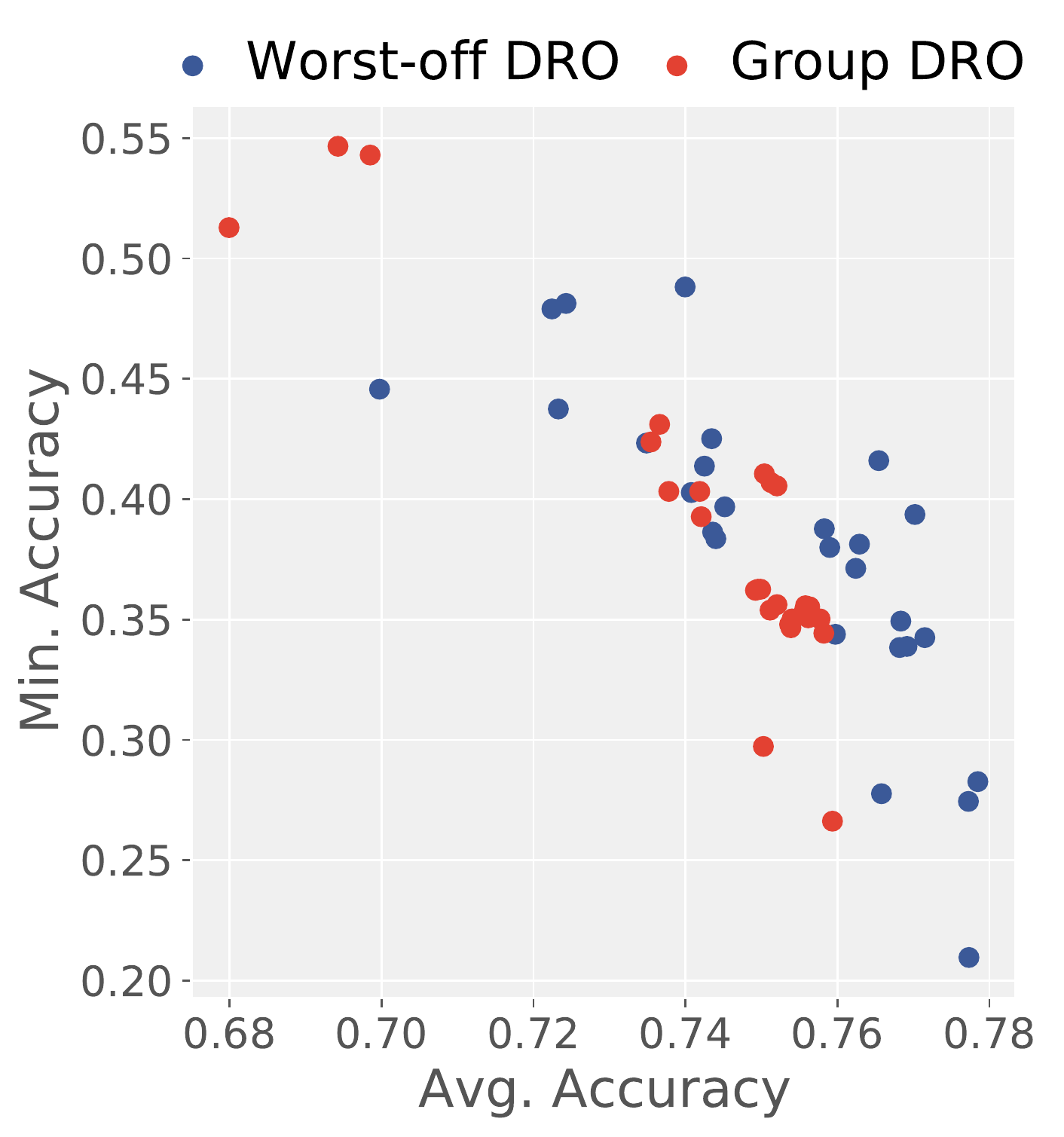}
	\caption{CMNIST}
	\end{subfigure}
	\begin{subfigure}[b]{0.24\linewidth}
	\includegraphics[width=\linewidth]{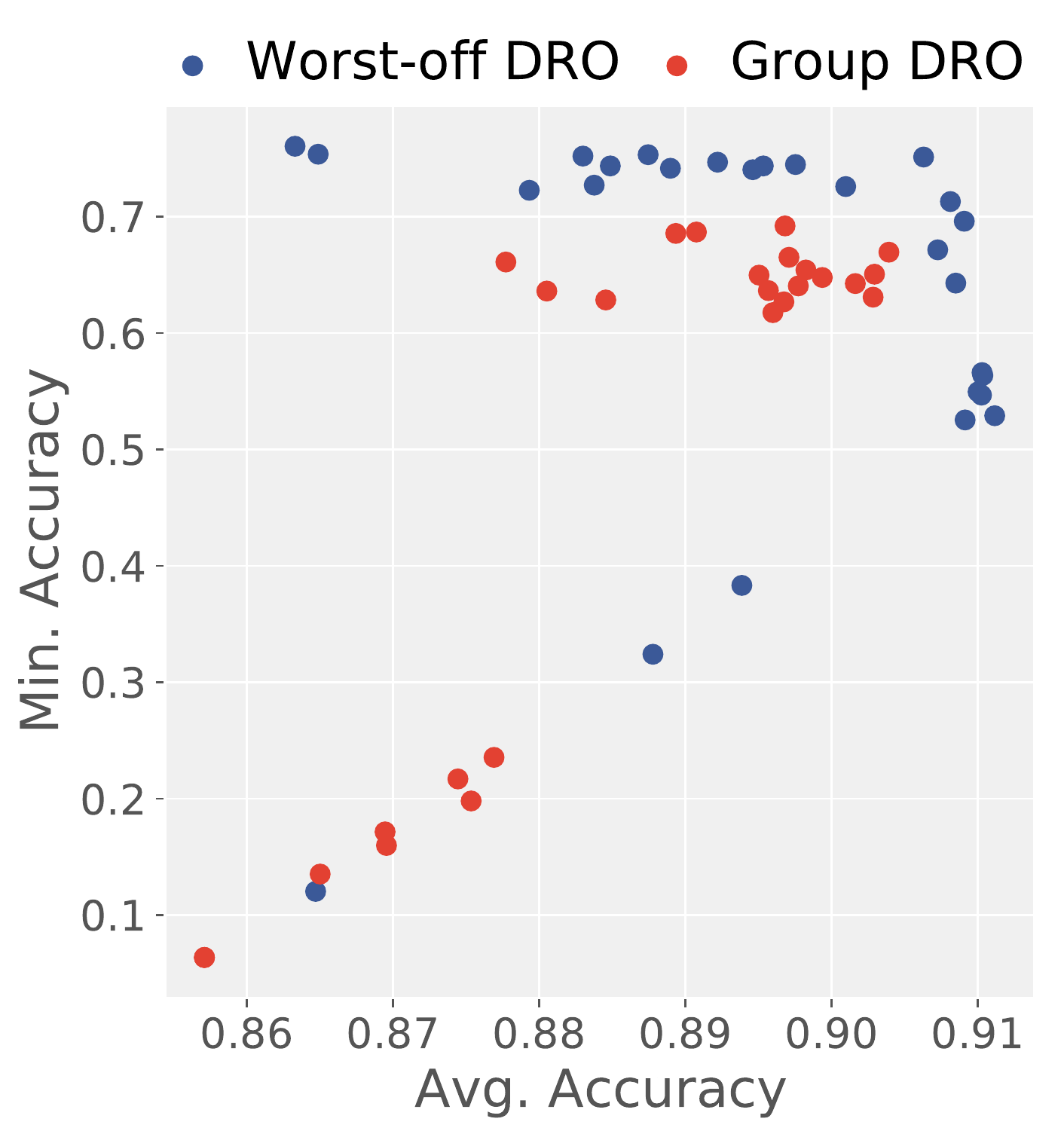}
	\caption{Adult}
	\end{subfigure}
	\begin{subfigure}[b]{0.24\linewidth}
	\includegraphics[width=\linewidth]{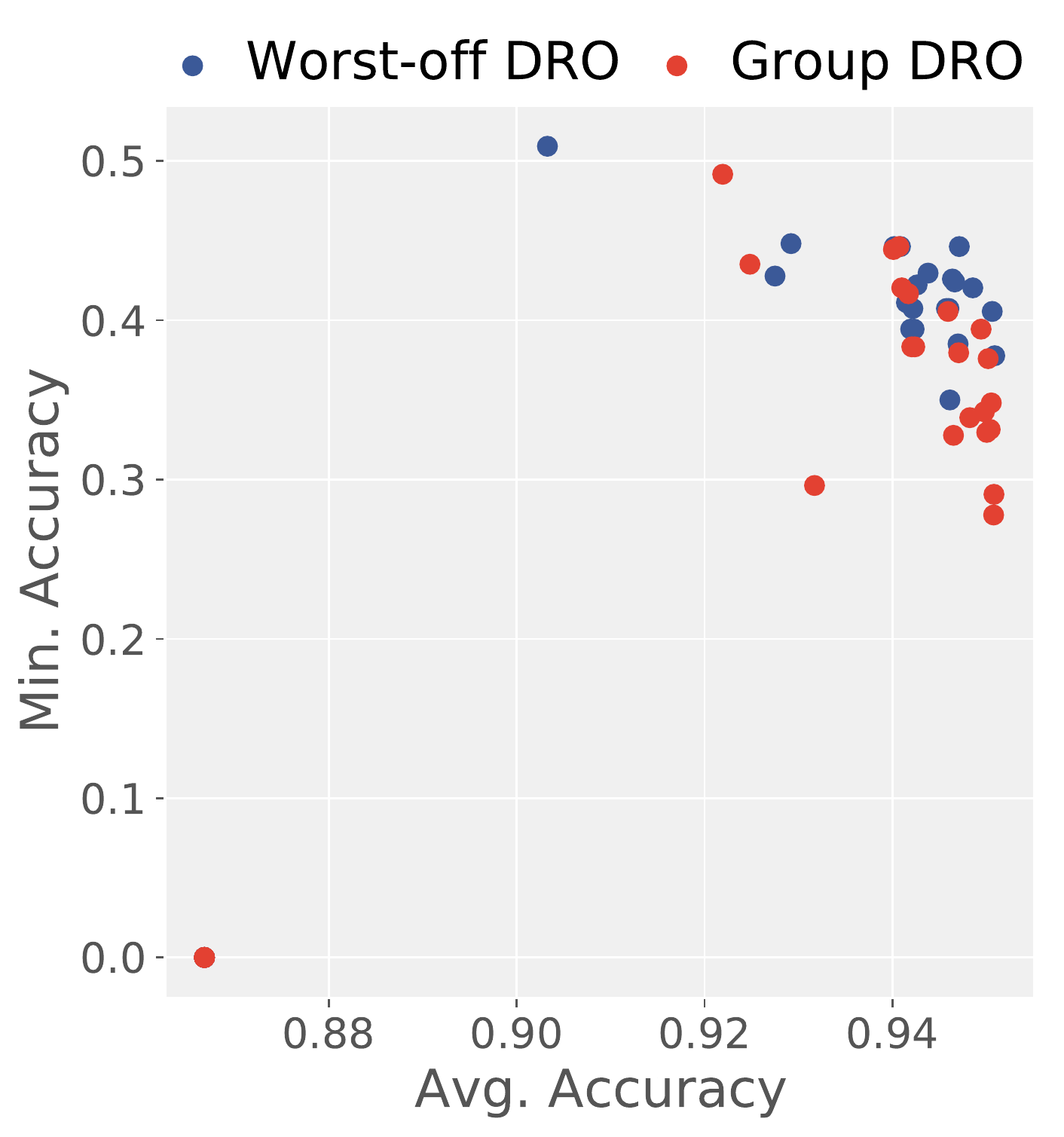}
	\caption{CelebA}
	\end{subfigure}%
	\vspace{-0.05in}
	\caption{ \textbf{Minority Group vs. Average Accuracy.}  Evaluations for different hyper-parameter choices are plotted for {\worstoffdro} and {\groupdro} (Partial) methods. Models from {\worstoffdro} training are concentrated in the top-right corner of the plots. This is desirable indicating a high accuracies across the two metrics. For model selection from among the possible choices, we adopt the NVP procedure (see Section~\ref{sec:exp}). }
	\label{fig:pareto}
	\vspace{-0.1in}
\end{figure*}
\begin{figure*}[!t]
    \centering
	\begin{subfigure}[b]{0.24\linewidth}
	\includegraphics[width=\linewidth]{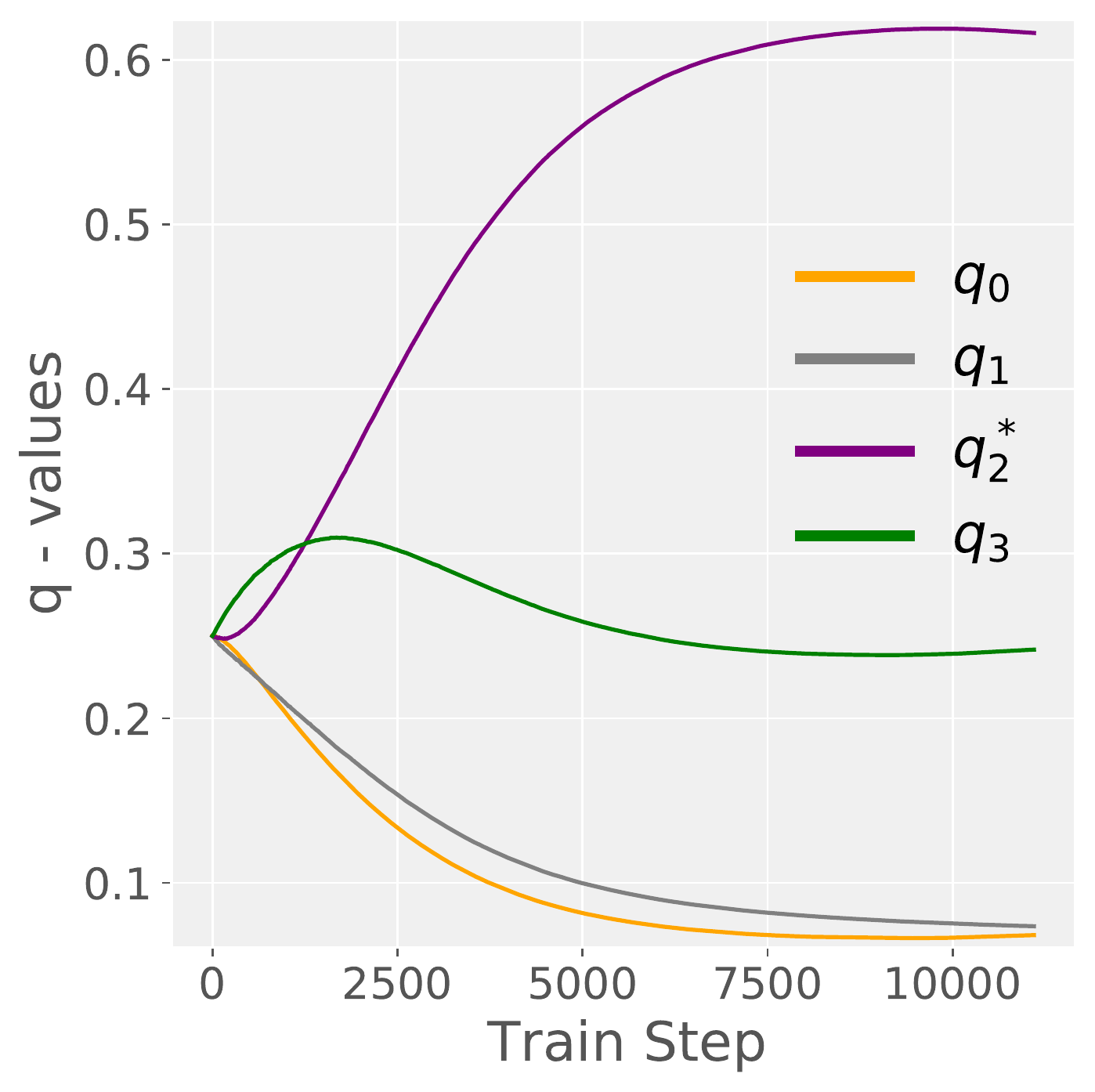}
	\caption{Waterbirds}
	\end{subfigure} 
	\begin{subfigure}[b]{0.24\linewidth}
	\includegraphics[width=\linewidth]{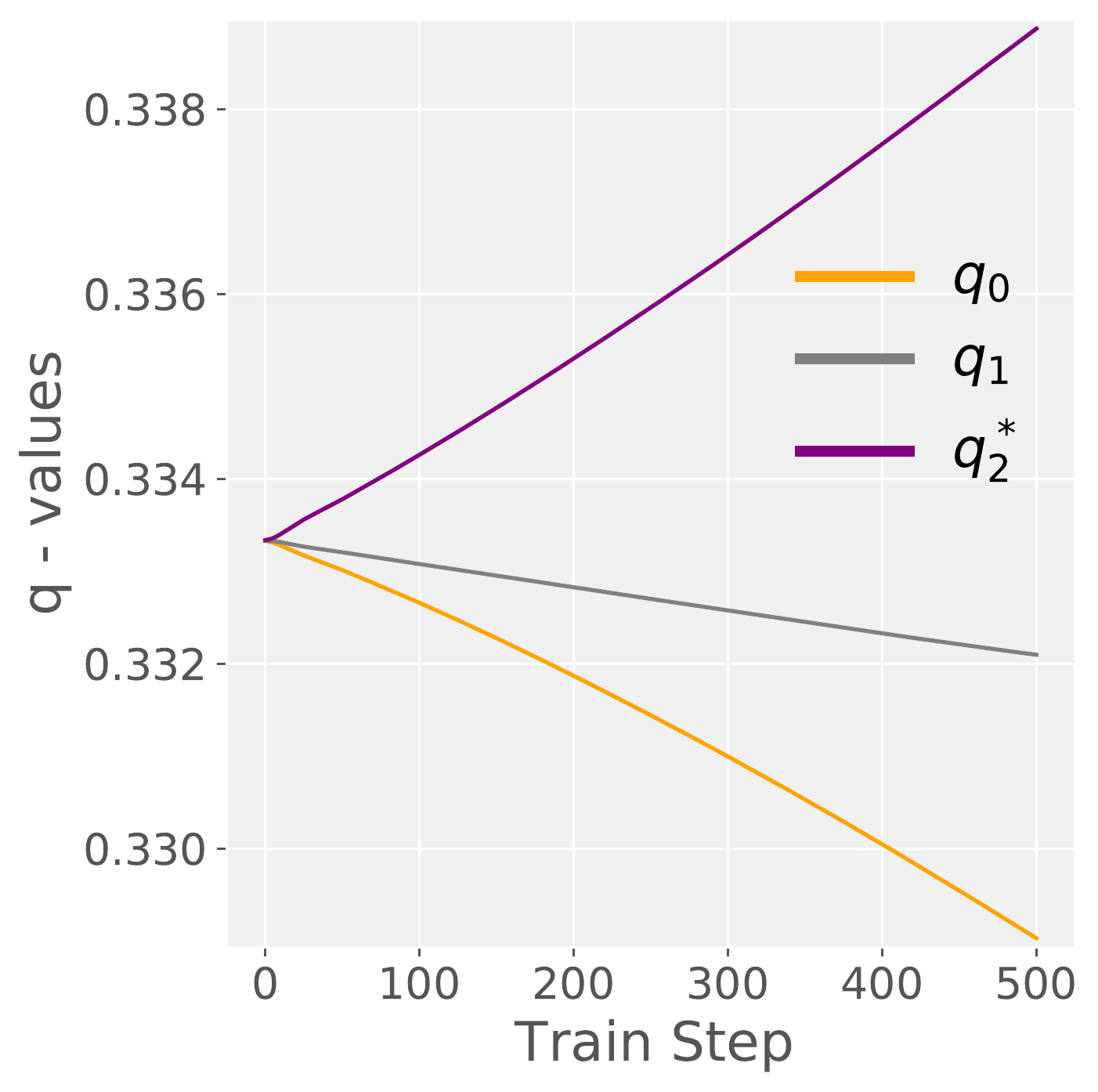}
	\caption{CMNIST}
	\end{subfigure}
	\begin{subfigure}[b]{0.24\linewidth}
	\includegraphics[width=\linewidth]{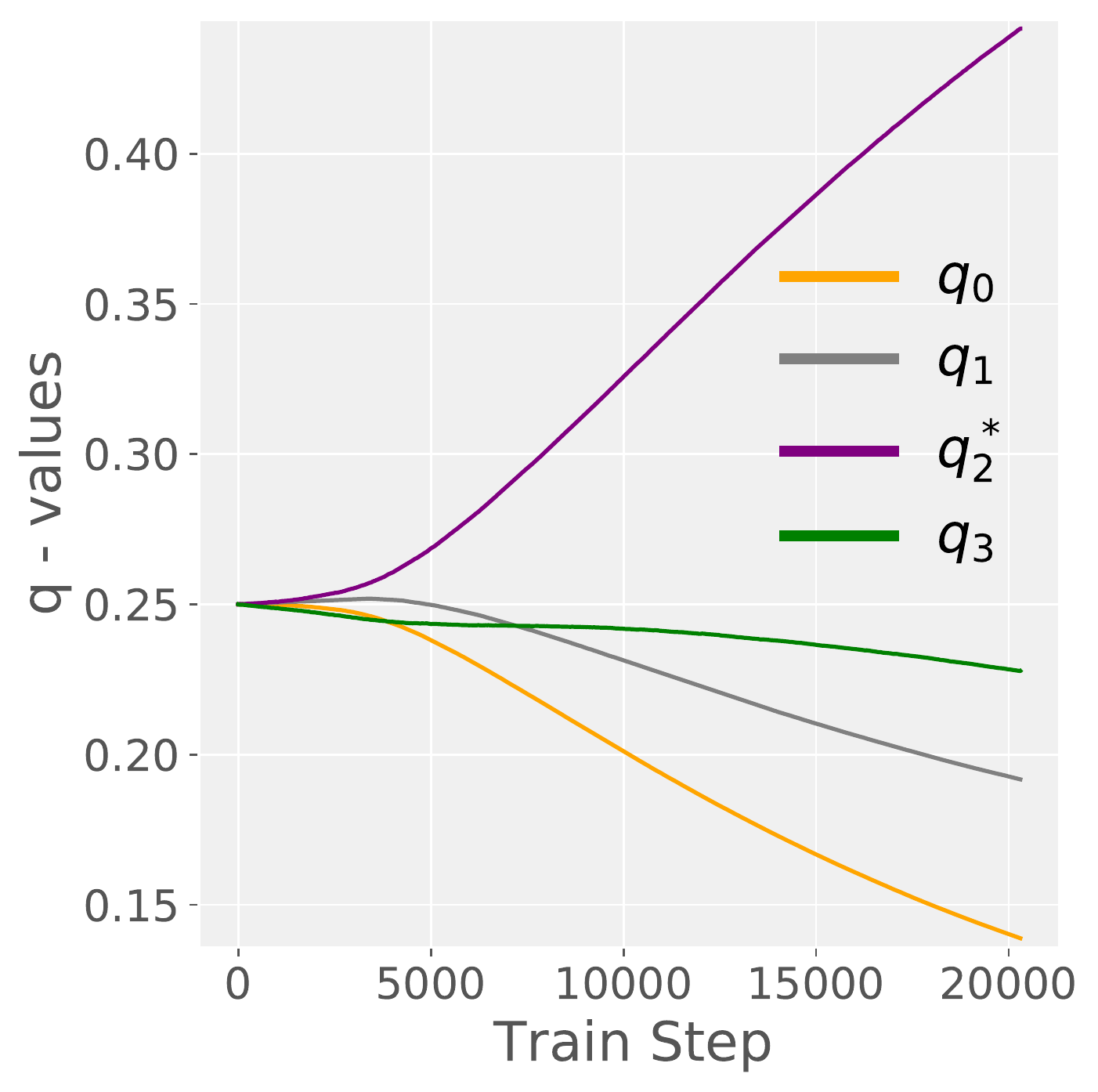}
	\caption{Adult}
	\end{subfigure}
	\begin{subfigure}[b]{0.24\linewidth}
	\includegraphics[width=\linewidth]{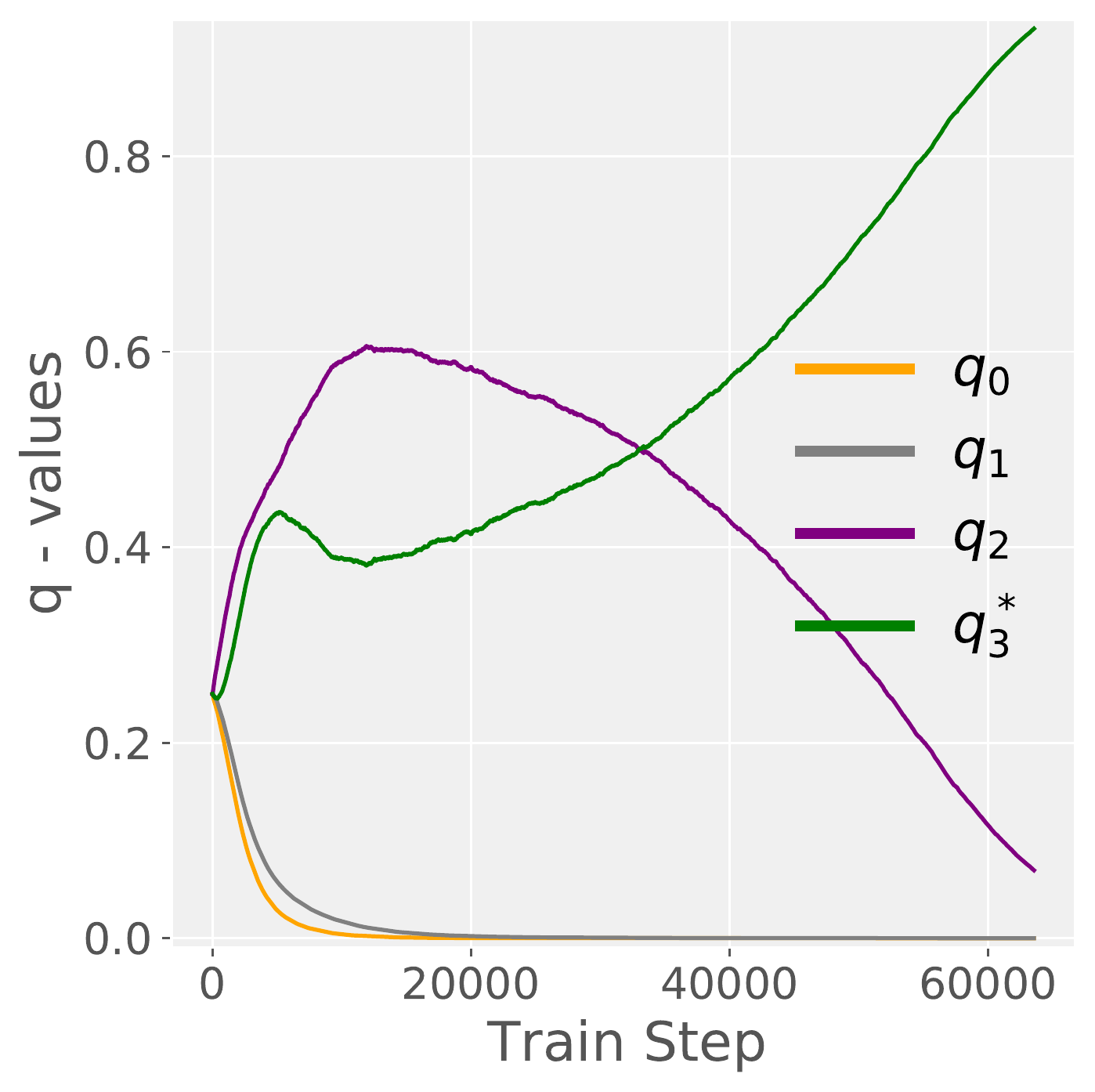}
	\caption{CelebA}
	\end{subfigure}%
	\vspace{-0.05in}
	\caption{ \textbf{Progression of group weights.} The evolution of $q-$values (see Algorithm~\ref{alg:worstoffgropudro_alg}) is plotted for each group. The $q-$values for the minority groups increases gradually while those of the majority groups reduce. A high $q-$value indicates that the corresponding group receives a higher weight relative to other groups. In the plots, minority group is indicate by a $*$ on $q$.}
	\label{fig:weights}
	\vspace{-0.2in}
\end{figure*}

\vspace{-0.05in}
\subsubsection{Progression of Group Weights}
\label{sec:exp_ablation_group_weights}
\vspace{-0.05in}

Our algorithm~\ref{alg:worstoffgropudro_alg}, proceeds by assigning weights, the $q-$value's, to every group. These $q-$values are updated through exponential ascent similar to \citep{sagawa2019distributionally}. Noticeably, the updates on $q$ depend on the worstoff group assignments determined from the constraint set $\mathcal{C}_{\bar{\mathbf{p}},\delta+\epsilon}$. In this section, we investigate on how these group weights evolve through the several iteration of the proposed algorithm. Figure~\ref{fig:weights} plots this evolution across different datasets with the number of curves in a given plot matching the group count of that dataset. Since the $q-$ weights are initialized uniformly at random, the curves begin at the same value. As the training progresses, it is observed that the weights on the minority groups gradually increase and those on the majority groups reduce. The plots indicate a high $q-$ value on the minority groups towards the end of training. This is desirable because the empirical risk on the minority groups getting upweighted relative to the majority groups.


\vspace{-0.05in}
\section{Conclusion}
\label{sec:conclusion}
\vspace{-0.05in}
We present {\worstoffdro}, an invariant learning method across groups when partial group labels are available. 
The formulation of {\worstoffdro} extends that of {\groupdro} by optimizing the loss against the worst-off group assignments in the constraint set. 
By reducing the constraint set with the marginal distribution, we reduce the optimization parameter space while keeping the objective to be an upper bound to that of the {\groupdro} with true group assignments with high probability. 
By harnessing both labeled and unlabeled data in terms of group, we demonstrate in experiments that the {\worstoffdro} outperforms both ERM, UnsupDRO, which do not make use of available group labels, as well as the {\groupdro} (Partial), which does not use unlabeled data.

One future direction, when marginal distribution is not available, is to relax our \textit{missing completely at random} assumption and bring in different but more realistic modeling assumptions on the missingness of group labels, such as missing at random (MAR), where missing values depend on other observed attributes~\citep{fernando2021missing}. 
In addition, it would be valuable to design a reduced constraint set containing the true group assignment to reduce the performance gap to the {\groupdro} (Oracle).

\section*{Ethics Statement}

Machine Learning (ML) models that perform poorly on a minority group or environment have raised a lot of concerns within the AI community and broader society in recent years. 
To democratize ML in real world, learning ML models that perform robustly across groups or environments has become an important venue of research. 
The proposed {\worstoffdro} is a versatile method that could be employed to train an invariant classifier across groups even when the group information is available only for the portion of the data.
This is a rather practical scenario as the group information could be missing for various reasons during the data collection. 
We further emphasize the importance of theoretical result showing the objective of {\worstoffdro} being an upper bound to that of {\groupdro} with complete group information for safety-critical ML applications.

\section*{Reproducibility}

We write our experimental code from scratch using PyTorch library~\citep{paszke2019pytorch}. Due to its similarity, our implementation may closely follow that of {\groupdro}~\citep{sagawa2019distributionally}.\footnote{\url{https://github.com/kohpangwei/group_DRO}} One of the key differentiation of {\worstoffdro} is the inner maximization solver for the worst-off group assignments $\{\hat{g}\}$, which we elaborate the exact code using CVXPY solver~\citep{diamond2016cvxpy} in Algorithm~\ref{alg:code} of Appendix. Additional implementation details, including the neural network architectures, as well as value for hyperparameters including the learning rate, weight decay, batch size, number of training epochs, and algorithm-specific parameters are summarized in Table~\ref{tab:hparams_more} of Appendix and Section~\ref{sec:app_hparam_tuning} and \ref{sec:app_more_details}.

\nocite{*}
\bibliography{ms}
\bibliographystyle{plainnat}

\clearpage
\appendix
\section{Appendix}

\subsection{Proof of Lemmas}
\label{sec:proof_of_lemma}

\setcounter{theorem}{0}
\begin{lemma}
    Denote $\mathcal{L}_{\mathrm{GDRO}}$ at a given $w$ and $q$ parameters as $\mathcal{L}_{\mathrm{GDRO}(w, q)}$. Similarly $\mathcal{L}_{\mathrm{WDRO}}(\mathcal{C})$ at a fixed $w$ and $q$ as $\mathcal{L}_{\mathrm{WDRO}(w, q)}(\mathcal{C})$. When the ground-truth group assignment $\{g^{\star}_{i}\}_{i=1}^{N} \in \mathcal{C}$, we have
    \begin{align}
        \min_{w}\max_{q\in\Delta^{M}}\mathcal{L}_{\mathrm{GDRO}(w, q)} \le \min_{w}\max_{q\in\Delta^{M}}\mathcal{L}_{\mathrm{WDRO}(w, q)}(\mathcal{C})
    \end{align}
\end{lemma}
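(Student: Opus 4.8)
The plan is to reduce the claim to an elementary two-step argument: first establish a \emph{pointwise} inequality between the two objectives at every fixed pair $(w,q)$, and then lift this bound through the nested $\min_{w}\max_{q}$ by monotonicity. The only substantive content is the feasibility of the ground-truth assignment inside the inner maximization; everything after that is bookkeeping.

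First I would fix $w$ and $q$ and recognize that $\mathcal{L}_{\mathrm{GDRO}(w,q)}$ is precisely the objective appearing inside the inner $\max_{\{\hat g\}\in\mathcal{C}}$ that defines $\mathcal{L}_{\mathrm{WDRO}(w,q)}(\mathcal{C})$, evaluated at the single assignment $\hat g = g^{\star}$. The hypothesis $\{g^{\star}_i\}_{i=1}^N\in\mathcal{C}$ makes $g^{\star}$ a feasible point for that maximization, so its value is dominated by the supremum over the whole set:
\[
\mathcal{L}_{\mathrm{GDRO}(w,q)} = \sum_{j=1}^{M} q_{j}\Big[\tfrac{\sum_{i} \mathbbm{1}\{g^{\star}_{i}=j\}\, l(x_{i},y_{i},w)}{\sum_{i} \mathbbm{1}\{g^{\star}_{i}=j\}}\Big] \le \max_{\{\hat g\}\in\mathcal{C}}\sum_{j=1}^{M} q_{j}\Big[\tfrac{\sum_{i} \mathbbm{1}\{\hat g_{i}=j\}\, l(x_{i},y_{i},w)}{\sum_{i} \mathbbm{1}\{\hat g_{i}=j\}}\Big] = \mathcal{L}_{\mathrm{WDRO}(w,q)}(\mathcal{C}).
\]
This pointwise bound holds for \emph{every} $w$ and $q$, which is the key step.

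Next I would propagate the inequality through the two optimizations. Writing $F(w)=\max_{q\in\Delta^{M}}\mathcal{L}_{\mathrm{GDRO}(w,q)}$ and $G(w)=\max_{q\in\Delta^{M}}\mathcal{L}_{\mathrm{WDRO}(w,q)}(\mathcal{C})$, the pointwise bound gives $\mathcal{L}_{\mathrm{GDRO}(w,q)}\le G(w)$ for every $q$, and taking the supremum over $q$ yields $F(w)\le G(w)$ for every $w$. Finally, choosing a minimizer $w^{\star}$ of $G$ gives $\min_{w} F(w)\le F(w^{\star})\le G(w^{\star})=\min_{w} G(w)$, which is exactly the asserted inequality. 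I do not expect a genuine obstacle: the whole argument is ``the maximum over a set dominates the value at any member, and $\min_{w}\max_{q}$ is monotone under pointwise domination.'' The one place demanding care is the direction of the $\min_{w}$ step, where $F\le G$ pointwise must be used to bound $\min_w F$ via $F$ evaluated at the minimizer of $G$ (not the reverse). I would also remark that the same derivation applies verbatim to the reduced sets $\mathcal{C}_{\mathbf{p},\epsilon}\subset\mathcal{C}$ whenever $g^{\star}\in\mathcal{C}_{\mathbf{p},\epsilon}$, which is what links this lemma to the high-probability guarantees of Lemma~\ref{lemma:proof_1} and Lemma~\ref{lemma:proof_2}.
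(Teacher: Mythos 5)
Your proof is correct and follows essentially the same route as the paper's: establish the pointwise bound $\mathcal{L}_{\mathrm{GDRO}(w,q)} \le \mathcal{L}_{\mathrm{WDRO}(w,q)}(\mathcal{C})$ from feasibility of $g^{\star}$ in the inner maximization, lift it through $\max_{q}$ (the paper does this via the argmax points $q^*_{\mathrm{GDRO}}, q^*_{\mathrm{WDRO}}$, which is your $F(w)\le G(w)$ step in different notation), and then minimize over $w$. If anything, your handling of the final $\min_{w}$ step via the minimizer of $G$ is slightly more explicit than the paper's one-line conclusion.
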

\begin{proof}
    Under the case $\{g^{\star}_{i}\}_{i=1}^{N} \in \mathcal{C}$, due to the $\max$ over $\mathcal{C}$, we have
    \begin{align}
        \label{eq:ub_proof_one}
        \mathcal{L}_{\mathrm{GDRO}(w, q)} \le \mathcal{L}_{\mathrm{WDRO}(w, q)}(\mathcal{C}) \quad \forall w, q
    \end{align}
    Define $q^*_\mathrm{WDRO} = {\argmax}_{q\in\Delta^{M}} \mathcal{L}_{\mathrm{WDRO}(w, q)}(\mathcal{C})$ and $q^*_\mathrm{GDRO} = \argmax_{q\in\Delta^{M}} \mathcal{L}_{\mathrm{GDRO}(w, q)}$. \\
    From the above definitions, we have,
    \begin{align}
        \label{eq:ub_proof_two}
        \mathcal{L}_{\mathrm{WDRO}(w, q)}(\mathcal{C}) \le \mathcal{L}_{\mathrm{WDRO}(w, q^*_\mathrm{WDRO})}(\mathcal{C})
    \end{align}
    Moreover, 
    \begin{align}
        \mathcal{L}_{\mathrm{GDRO}(w, q^*_\mathrm{GDRO})} &\le \mathcal{L}_{\mathrm{WDRO}(w, q^*_\mathrm{GDRO})}(\mathcal{C}) \quad \text{from } \eqref{eq:ub_proof_one} \\
        \mathcal{L}_{\mathrm{WDRO}(w, q^*_\mathrm{GDRO})}(\mathcal{C}) &\le \mathcal{L}_{\mathrm{WDRO}(w, q^*_\mathrm{WDRO})}(\mathcal{C}) \quad \text{from } \eqref{eq:ub_proof_two} \\
        \label{eq:ub_proof_optim}
        \implies \mathcal{L}_{\mathrm{GDRO}(w, q^*_\mathrm{GDRO})} &\le \mathcal{L}_{\mathrm{WDRO}(w, q^*_\mathrm{WDRO})}(\mathcal{C})
    \end{align}
    Minimizing \eqref{eq:ub_proof_optim} over $w$, we obtain,
    \begin{align*}
        \min_{w}\max_{q\in\Delta^{M}}\mathcal{L}_{\mathrm{GDRO}(w, q)} \le \min_{w}\max_{q\in\Delta^{M}}\mathcal{L}_{\mathrm{WDRO}(w, q)}(\mathcal{C})
    \end{align*}
\end{proof}

\begin{lemma}
The constraint set $\mathcal{C}_{\mathbf{p}^{\star}, \epsilon}$ contains the true group labels $\{g_{i}^{\star}\}_{i=1}^{N}$ with high probability:
\begin{equation}
    P(\{g_{i}^{\star}\}_{i=1}^{N}\,{\in}\,\mathcal{C}_{\mathbf{p}^{\star}, \epsilon}) \geq 1\,{-}\,2e^{-2N\epsilon^{2}}\nonumber
\end{equation}
\end{lemma}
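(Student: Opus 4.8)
The plan is to reduce the event $\{g_i^\star\}_{i=1}^N \in \mathcal{C}_{\mathbf{p}^\star, \epsilon}$ to a concentration statement about empirical group frequencies and then invoke Hoeffding's inequality. First I would note that, evaluated at the ground-truth assignment itself, the labeled-data condition \eqref{eq:constraint_labeled_data} holds trivially since it reads $g_i^\star = g_i^\star$ for $i \le K$; hence only the marginal condition \eqref{eq:constraint_marginal} can fail, and membership in the set is equivalent to $\vert \frac{1}{N}\sum_{i=1}^N \mathbbm{1}\{g_i^\star = j\} - \mathbf{p}_j^\star\vert \le \epsilon$ holding for every $j \le M$.

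Next I would make the sampling model explicit: the group labels $g_i^\star$ are i.i.d.\ draws from the true marginal $\mathbf{p}^\star$, so for a fixed $j$ the indicators $\mathbbm{1}\{g_i^\star = j\}$ are independent $[0,1]$-valued random variables with mean $\mathbf{p}_j^\star$, and their average is exactly the empirical frequency appearing in \eqref{eq:constraint_marginal}. Applying the two-sided Hoeffding bound to this average gives, for each single group index,
\[
  P\!\left(\left\vert \tfrac{1}{N}\sum_{i=1}^N \mathbbm{1}\{g_i^\star = j\} - \mathbf{p}_j^\star\right\vert > \epsilon\right) \le 2e^{-2N\epsilon^2},
\]
so the complementary (good) event has probability at least $1 - 2e^{-2N\epsilon^2}$, matching \eqref{eq:proof_1}.

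The main obstacle is the \emph{simultaneous} requirement that \eqref{eq:constraint_marginal} hold for all $M$ groups at once. A naive union bound over $j$ would cost a factor of $M$, producing $1 - 2Me^{-2N\epsilon^2}$ rather than the clean stated bound. The stated form is exact in the binary case $M = 2$, where the two empirical frequencies are complements and controlling one coordinate automatically controls the other, so a single Hoeffding application does the whole job; for general $M$ I would either absorb the multiplicity into the constant or explicitly carry the extra factor. I would flag this point rather than conceal it, since it is the only nontrivial step in an otherwise direct argument.
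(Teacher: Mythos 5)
Your proposal follows essentially the same route as the paper: evaluate the constraints at the ground-truth assignment, note that the labeled-data condition \eqref{eq:constraint_labeled_data} is vacuous there, and apply the two-sided Hoeffding inequality to the empirical group frequencies. The union-bound issue you flag is real, and in fact the paper's own proof silently elides it: the paper writes $P(\{g_{i}^{\star}\}_{i=1}^{N}\,{\in}\,\mathcal{C}_{\mathbf{p}^{\star}, \epsilon})$ as \emph{equal} to the deviation probability for a single group index $j$, whereas membership requires the marginal condition \eqref{eq:constraint_marginal} to hold for all $j \le M$ simultaneously; a correct general statement is $1 - 2Me^{-2N\epsilon^{2}}$ via a union bound over groups, with the stated constant-free form exact only for $M=2$, where the two frequencies are complements. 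So your treatment is, if anything, more careful than the paper's. The one assumption you should state explicitly in a final write-up (the paper also leaves it implicit, but Hoeffding requires it) is that the $g_{i}^{\star}$ are i.i.d.\ draws from the multinomial with parameter $\mathbf{p}^{\star}$, so that the indicators $\mathbbm{1}\{g_{i}^{\star}=j\}$ are independent with mean $\mathbf{p}_{j}^{\star}$.
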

\begin{proof}
The probability of the true group assignment $\{g_{i}^{\star}\}_{i=1}^{N}$ in the constraint set $\mathcal{C}_{\mathbf{p}^{\star}, \epsilon}$ is written as follows:
\begin{equation}
    P(\{g_{i}^{\star}\}_{i=1}^{N}\,{\in}\,\mathcal{C}_{\mathbf{p}^{\star}, \epsilon}) = P\Big(\big\vert p_{j}^{\star}\,{-}\,\frac{1}{N}\sum\nolimits_{i=1}^{N}\mathbbm{1}\{g_{i}^{\star}=j\}\big\vert\,{\leq}\,\epsilon\Big) \geq 1 - 2e^{-2N\epsilon^2}\label{eq:groupdro_constraint_set_with_marginal_proof}
\end{equation}
where \eqref{eq:groupdro_constraint_set_with_marginal_proof} holds true from the Hoeffding's inequality.
\end{proof}

\begin{lemma}
The constraint set $\mathcal{C}_{\bar{\mathbf{p}}, \delta+\epsilon}$ contains the true group labels $\{g^{\star}_{i}\}_{i=1}^{N}$ with high probability:
\begin{equation}
    P(\{g_{i}^{\star}\}_{i=1}^{N}\,{\in}\,\mathcal{C}_{\bar{\mathbf{p}}, \delta+\epsilon}) \geq 1\,{-}\,2e^{-2N\epsilon^{2}}\,{-}\,2e^{-2K\delta^{2}}\nonumber
\end{equation}
\end{lemma}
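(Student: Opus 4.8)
The plan is to reduce this statement to two applications of Hoeffding's inequality combined by a union bound, reusing Lemma~\ref{lemma:proof_1} for one of the two pieces. The event $\{g_i^\star\}_{i=1}^N \in \mathcal{C}_{\bar{\mathbf{p}}, \delta+\epsilon}$ amounts to verifying the two defining conditions of the constraint set at the true labels: the labeled-consistency condition $g_i = g_i^\star$ for $i \le K$ holds automatically because we are substituting the true labels, so the entire argument hinges on the marginal condition $\vert \frac{1}{N}\sum_{i=1}^N \mathbbm{1}\{g_i^\star = j\} - \bar{\mathbf{p}}_j \vert \le \delta + \epsilon$ for every $j \le M$.

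First I would insert the true marginal $\mathbf{p}^\star$ and split this deviation by the triangle inequality:
\begin{equation*}
\Big\vert \tfrac{1}{N}\sum_{i=1}^N \mathbbm{1}\{g_i^\star = j\} - \bar{\mathbf{p}}_j \Big\vert \le \Big\vert \tfrac{1}{N}\sum_{i=1}^N \mathbbm{1}\{g_i^\star = j\} - \mathbf{p}^\star_j \Big\vert + \big\vert \mathbf{p}^\star_j - \bar{\mathbf{p}}_j \big\vert.
\end{equation*}
The first term is the empirical-versus-true marginal gap over all $N$ samples, which is exactly the quantity controlled in Lemma~\ref{lemma:proof_1}: it exceeds $\epsilon$ with probability at most $2e^{-2N\epsilon^2}$. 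The second term is the estimation error of $\bar{\mathbf{p}}$ computed from the $K$ labeled samples, which I would bound by a second Hoeffding application, giving a failure probability at most $2e^{-2K\delta^2}$. Whenever both good events hold (first term $\le \epsilon$ and second term $\le \delta$), the triangle inequality yields total deviation $\le \delta + \epsilon$, placing $\{g_i^\star\}_{i=1}^N$ in $\mathcal{C}_{\bar{\mathbf{p}}, \delta+\epsilon}$; a union bound on the two failure events then delivers the claimed $1 - 2e^{-2N\epsilon^2} - 2e^{-2K\delta^2}$.

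The main obstacle — and the only place the problem setting genuinely enters — is justifying the second Hoeffding bound on $\vert \mathbf{p}^\star_j - \bar{\mathbf{p}}_j \vert$. This step requires that the $K$ labeled samples form an i.i.d. draw from the same group distribution as the full dataset, which is precisely what the \emph{missing completely at random} assumption provides: under MCAR the group labels are revealed independently of their values, so $\bar{\mathbf{p}}_j = \frac{1}{K}\sum_{i=1}^K \mathbbm{1}\{g_i^\star = j\}$ is an average of $K$ independent bounded $\{0,1\}$ indicators with mean $\mathbf{p}^\star_j$, and Hoeffding applies verbatim. Without MCAR the labeled subset could be systematically skewed, $\bar{\mathbf{p}}$ would be biased, and this concentration step would break. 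I would also note, as implicit already in Lemma~\ref{lemma:proof_1}, that a further union bound over the $M$ coordinates is suppressed; following the paper's convention I state the per-coordinate constants.
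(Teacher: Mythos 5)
Your proof is correct and follows essentially the same route as the paper's own proof: insert the true marginal $\mathbf{p}^{\star}$, apply Hoeffding's inequality twice (once over the $N$ samples with tolerance $\epsilon$, once over the $K$ labeled samples with tolerance $\delta$), observe via the triangle inequality that the intersection of the two good events implies membership in $\mathcal{C}_{\bar{\mathbf{p}}, \delta+\epsilon}$, and finish with a union bound. Your extra remarks---that MCAR is what licenses the second Hoeffding application, and that the per-coordinate union bound over the $M$ groups is suppressed---make explicit what the paper leaves implicit, but do not alter the argument.
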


\begin{proof}
    Using Hoeffding's inequality, we can show that the estimation error of the marginal distribution is bounded by $\delta$ with high probability as follows:
    \begin{equation}
        P(\vert p_{j}^{\star} - \bar{p}_{j}\vert \leq \delta) = P\Big(\big\vert p_{j}^{\star} - \frac{1}{K}\sum\nolimits_{i=1}^{K}\mathbbm{1}\{g_{i}^{\star}=j\}\big\vert \leq \delta\Big) \geq 1-2e^{-2K\delta^2}
    \end{equation}
    Furthermore, we show using Hoeffding's inequality that
    \begin{equation}
        P\Big(\big\vert p_{j}^{\star} - \frac{1}{N}\sum\nolimits_{i=1}^{N}\mathbbm{1}\{g_{i}=j\}\big\vert \leq \epsilon\Big) \geq 1-2e^{-2N\epsilon^2}
    \end{equation}
    Now, the probability of the true group assignment $\{g_{i}^{\star}\}_{i=1}^{N}$ in the constraint set $\mathcal{C}_{\bar{\mathbf{p}}, \delta+\epsilon}$ is written as follows:
    \begin{align}
        P(\{g_{i}^{\star}\}_{i=1}^{N}\,{\in}\,\mathcal{C}_{\bar{\mathbf{p}}, \delta+\epsilon}) &= P\Big(\big\vert \bar{p}_{j}\,{-}\,\frac{1}{N}\sum\nolimits_{i=1}^{N}\mathbbm{1}\{g_{i}=j\}\big\vert\,{\leq}\,\delta+\epsilon\Big) \label{eq:groupdro_constraint_set_mcar_proof_1}\\
        & \geq P\Big(\Big\{\big\vert p_{j}^{\star} - \bar{p}_{j}\big\vert \leq \delta\Big\}\cap\Big\{\big\vert p_{j}^{\star} - \frac{1}{N}\sum\nolimits_{i=1}^{N}\mathbbm{1}\{g_{i}=j\}\big\vert \leq \epsilon\Big\}\Big) \label{eq:groupdro_constraint_set_mcar_proof_2}\\
        & \geq P\Big(\big\vert p_{j}^{\star} - \bar{p}_{j}\big\vert \leq \delta\Big) + P\Big(\big\vert p_{j}^{\star} - \frac{1}{N}\sum\nolimits_{i=1}^{N}\mathbbm{1}\{g_{i}=j\}\big\vert \leq \epsilon\Big) - 1 \label{eq:groupdro_constraint_set_mcar_proof_3}\\
        & \geq 1-2e^{-2K\delta^2}-2e^{-2N\epsilon^2}
    \end{align}
    where \eqref{eq:groupdro_constraint_set_mcar_proof_2} is due to that the intersection of events in \eqref{eq:groupdro_constraint_set_mcar_proof_2} is a subset of an event in \eqref{eq:groupdro_constraint_set_mcar_proof_1}, and \eqref{eq:groupdro_constraint_set_mcar_proof_3} is derived using union bound. 
\end{proof}

\begin{figure*}[t!]
	\begin{subfigure}[b]{0.24\linewidth}
	\includegraphics[width=\linewidth]{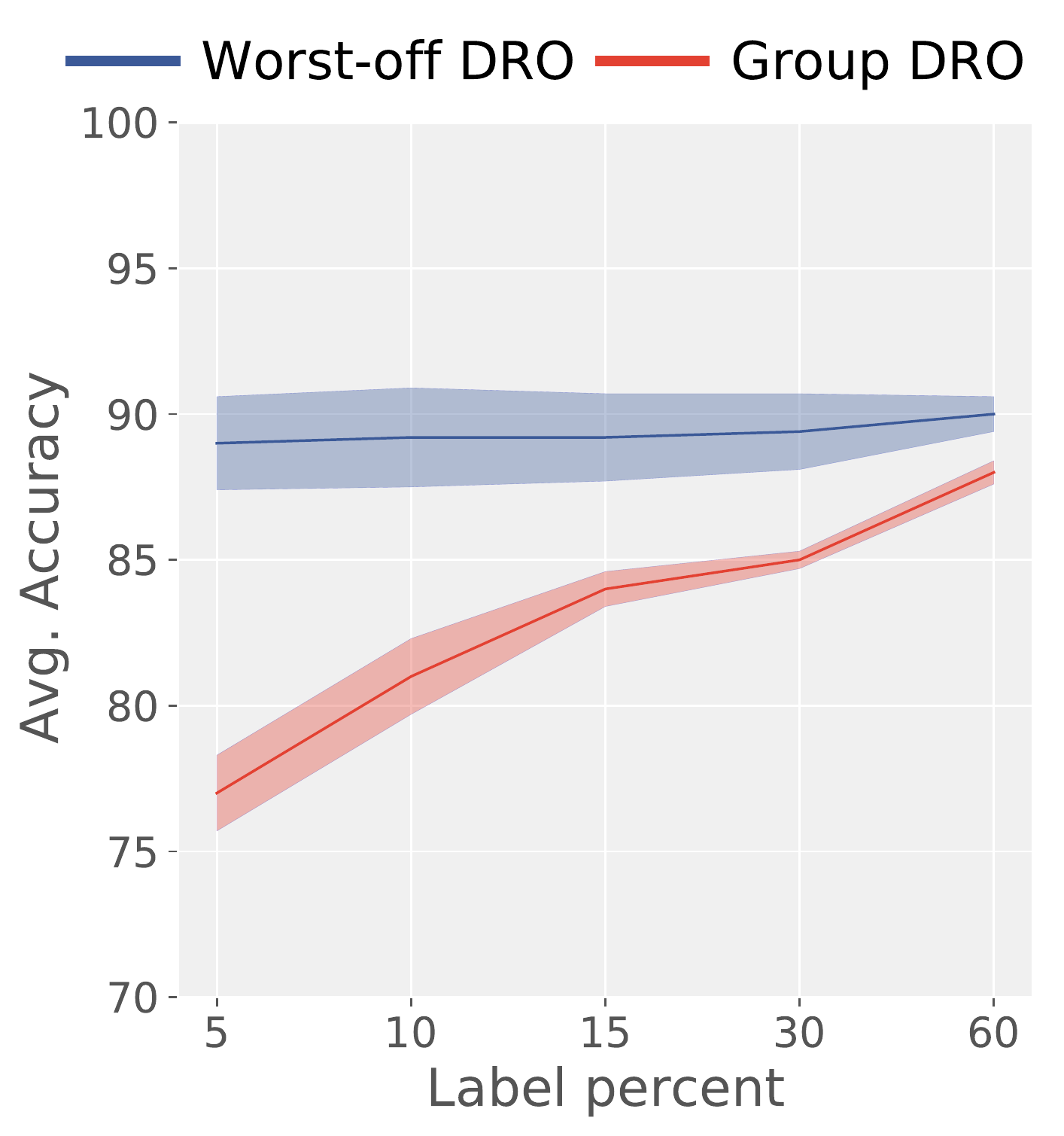}
	\caption{Waterbirds}
	\end{subfigure} 
	\begin{subfigure}[b]{0.24\linewidth}
	\includegraphics[width=\linewidth]{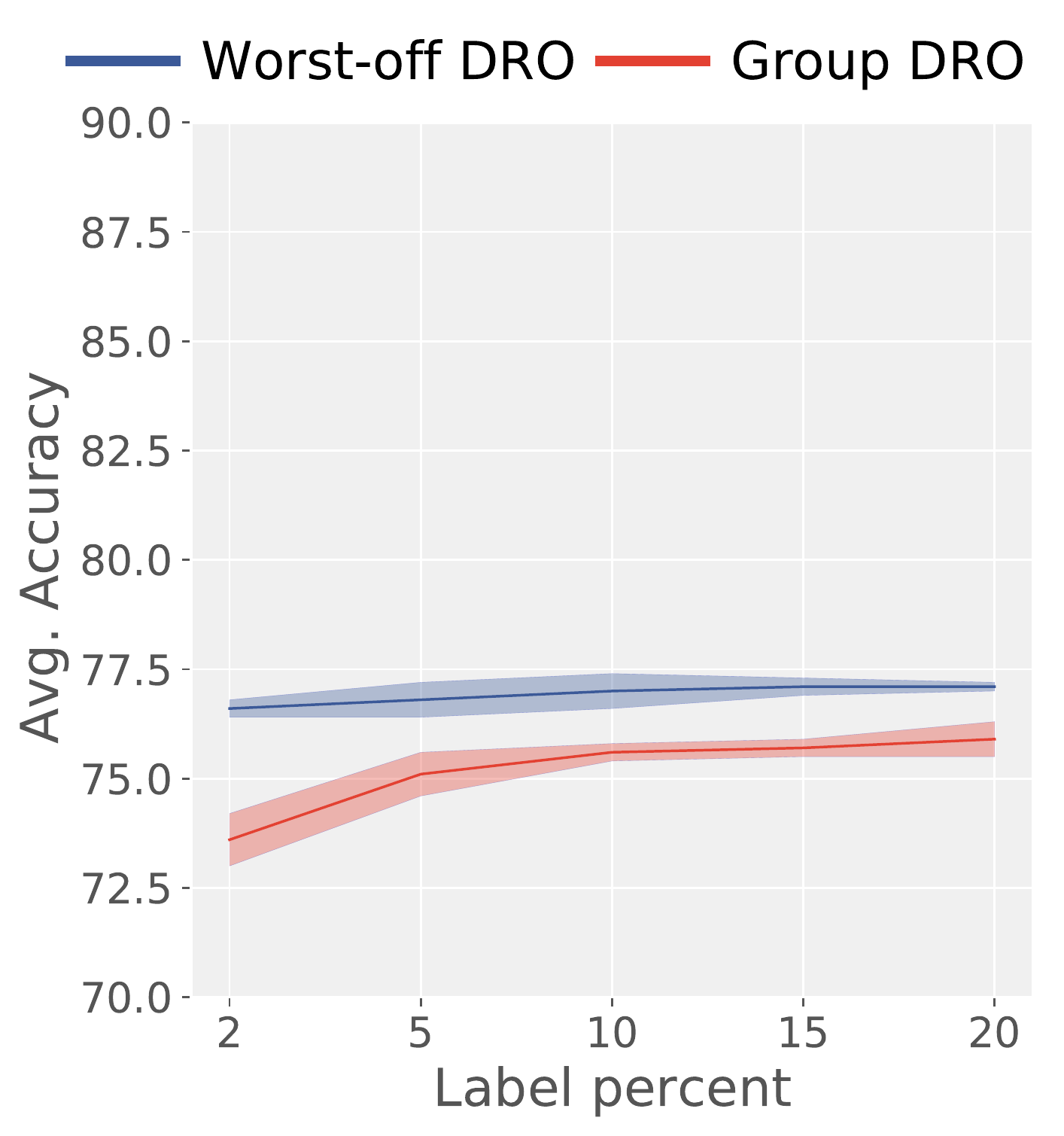}
	\caption{CMNIST}
	\end{subfigure}
	\begin{subfigure}[b]{0.24\linewidth}
	\includegraphics[width=\linewidth]{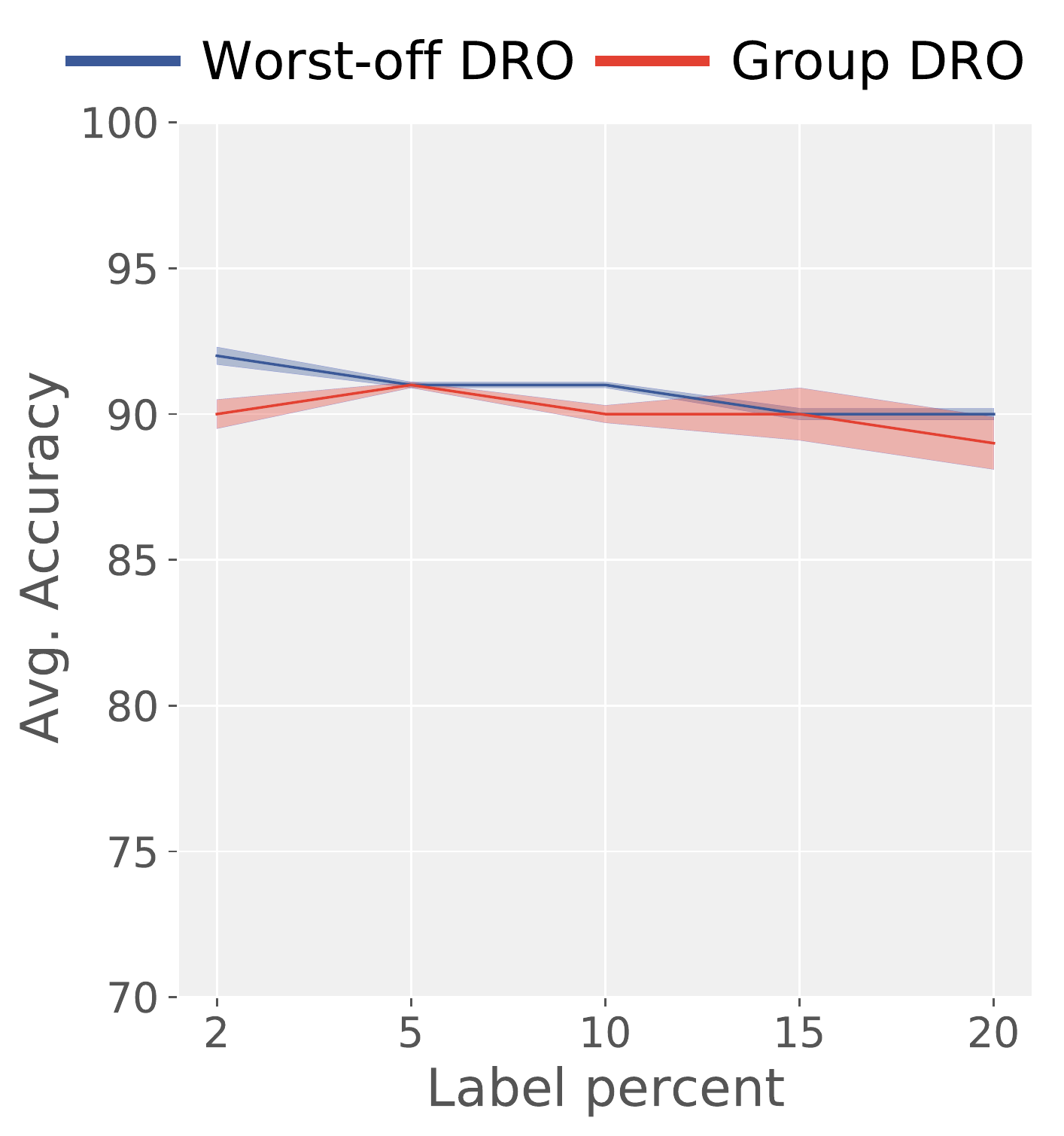}
	\caption{Adult}
	\end{subfigure}
	\begin{subfigure}[b]{0.24\linewidth}
	\includegraphics[width=\linewidth]{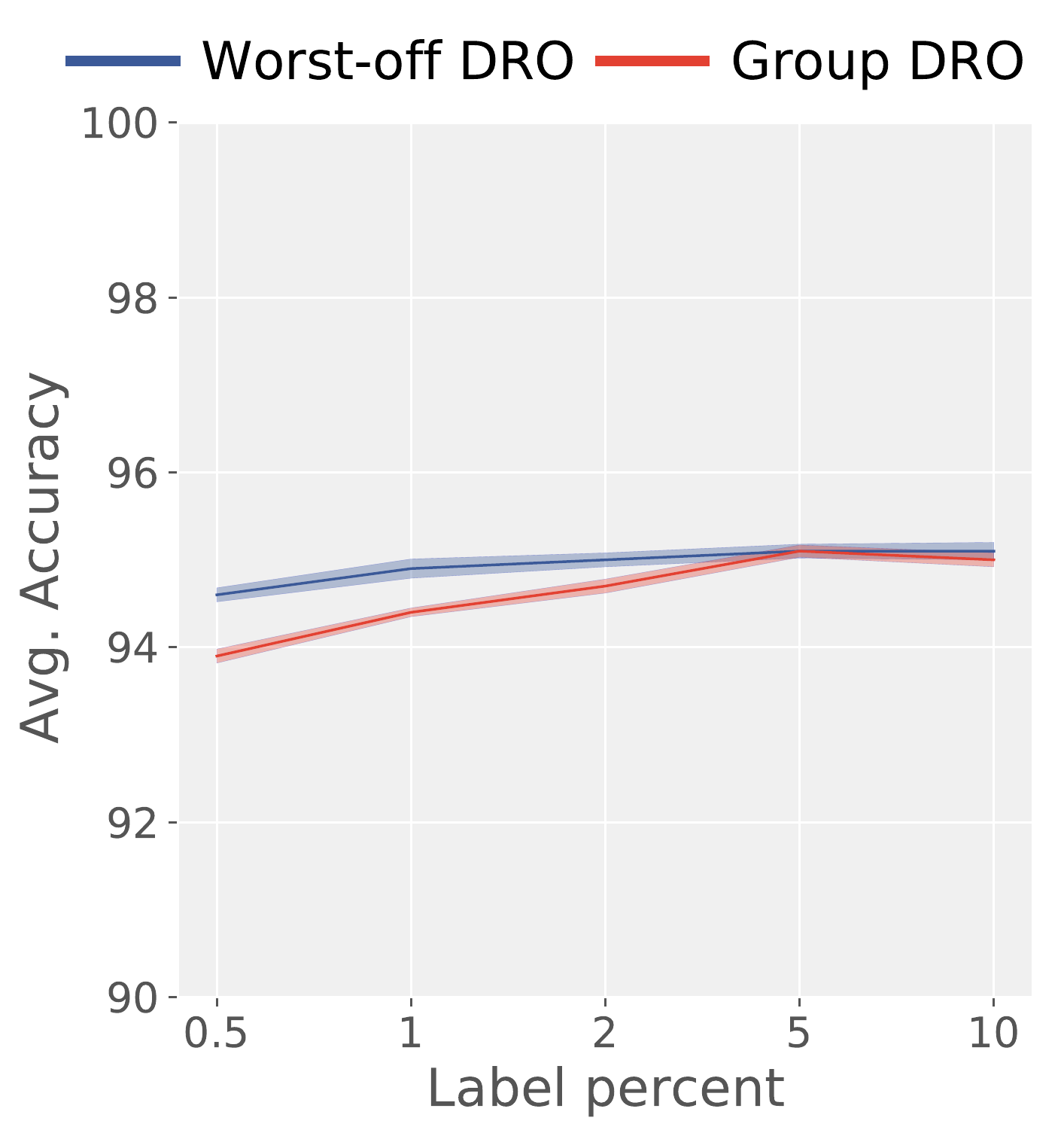}
	\caption{CelebA}
	\end{subfigure}%
	\centering
	\caption{ \textbf{Increasing the labelled samples - Average Group Accuracy.} We plot the average group accuracies as a function of labelled samples. These accuracies remain fairly similar as the count of labelled samples grows.  }
	\label{fig:labpercent_average}
\end{figure*}

\begin{table}[!t]
\footnotesize
\centering
\begin{tabular}{cc|cc}
\toprule
\multicolumn{2}{c}{Waterbirds}                                          & \multicolumn{2}{c}{CMNIST}                                                 \\\midrule\midrule
Learning Rate                    & $0.0001, 0.00001, 0.000001$            & Learning Rate                    & $0.001, 0.0001, 0.00001$                  \\
Weight Decay                     & $1.5, 1.0 , 0.1$                       & Weight Decay                     & $0.01, 0.001, 0.0001$                     \\
$\eta_\text{UDRO}$                     & $0.9, 0.8, 0.7, 0.6, 0.5, 0.4, 0.3$    & $\eta_\text{UDRO}$                     & $0.9, 0.8, 0.7, 0.6, 0.5, 0.4, 0.3$       \\
$\eta_\text{GDRO}$                     & $0.1, 0.01, 0.001$                     & $\eta_\text{GDRO}$ & $0.01, 0.001, 0.0001$ \\
$\eta_\text{WDRO}$                     & $0.1, 0.01, 0.001$                     & $\eta_\text{WDRO}$ & $0.01, 0.001, 0.0001$ \\\bottomrule
\\
\toprule
\multicolumn{2}{c}{Adult}                                               & \multicolumn{2}{c}{CelebA}                                                 \\\midrule\midrule
Learning Rate                    & $0.001, 0.0001, 0.00001$               & Learning Rate                    & $0.0001, 0.00001, 0.000001$               \\
Weight Decay                     & $0.01, 0.001, 0.0001$                  & Weight Decay                     & $1.0, 0.1, 0.01$                          \\
$\eta_\text{UDRO}$                     & $0.9, 0.8, 0.7, 0.6, 0.5, 0.4, 0.3$    & $\eta_\text{UDRO}$                     & $0.9, 0.8, 0.7, 0.6, 0.5, 0.4, 0.3$       \\
$\eta_\text{GDRO}$                     & $0.01, 0.001, 0.0001$                  & $\eta_\text{GDRO}$                     & $0.1, 0.01, 0.001$                        \\
$\eta_\text{WDRO}$                     & $0.01, 0.001, 0.0001$                  & $\eta_\text{WDRO}$                     & $0.1, 0.01, 0.001$                       \\\bottomrule
\end{tabular}
	\caption{\footnotesize \label{tab:hparams_more} \textbf{Grid search for Table~\ref{tab:quant_results}.} The range of values for each hyper-parameter is listed. A grid search over these hyper-parameters is conducted to identify the best performing model. Models outside these range values were observed to be either unstable or not converging. Model selection is done based on NVP (novel validation procedure) where first the models, with higher overall accuracies, are selected. From the top five such performing models, the one with the highest minority group accuracy is picked.}
\end{table}

\begin{table}[!t]
	\centering
	\resizebox{\columnwidth}{!}{%
		\begin{tabular}{c c c c c c c c}
		\toprule
			Dataset    & Method                  & Architecture  &Learning Rate &  Weight Decay &  Batch Size & \# Epochs & Other params \\ \midrule\midrule
			Waterbirds & ERM                     &  ResNet50     & 0.0001  & 0.1    & 128 & 300 & -  \\
			Waterbirds & {\unsupdro}                &  ResNet50     & 0.0001  & 0.1    & 128 & 300 & $\eta$=0.3 \\
			Waterbirds & {\groupdro}-(Oracle)       &  ResNet50     & 0.00001 & 1.0    & 128 & 300 & $\eta$=0.001 \\
			Waterbirds & {\groupdro}-(Partial)      &  ResNet50     & 0.00001 & 0.1    & 128 & 300 & $\eta$=0.001 \\
			Waterbirds & {\worstoffdro}             &  ResNet50     & 0.00001 & 1.0    & 128 & 300 & $\eta$=0.001 \\
			CMNIST     & ERM                     &  MLP(390,390) & 0.001   & 0.01  & - & 500 & -  \\
			CMNIST     & {\unsupdro}                &  MLP(390,390) & 0.00001 & 0.001  & - & 500 & $\eta$=0.4  \\
			CMNIST     & {\groupdro}-(Oracle)       &  MLP(390,390) & 0.0001  & 0.001  & - & 500 & $\eta$=0.001\\
			CMNIST     & {\groupdro}-(Partial)      &  MLP(390,390) & 0.001  & 0.01  & - & 500 & $\eta$=0.001\\
			CMNIST     & {\worstoffdro}             &  MLP(390,390) & 0.0001  & 0.01  & - & 500 & $\eta$=0.0001  \\ 
			Adult      & ERM                     &  MLP(64,32)   &  0.0001 & 0.001   & 128 & 200 & - \\
			Adult      & {\unsupdro}                &  MLP(64,32)   &  0.0001 & 0.001   & 128 & 200 & $\eta$=0.3  \\
			Adult      & {\groupdro}-(Oracle)       &  MLP(64,32)   &  0.0001 & 0.001   & 128 & 200 & $\eta$=0.0001  \\			Adult      & {\groupdro}-(Partial)      &  MLP(64,32)   &  0.0001 & 0.01   & 128 & 200 & $\eta$=0.001  \\
			Adult      & {\worstoffdro}             &  MLP(64,32)   &  0.00001& 0.001   & 128 & 200 & $\eta$=0.0001  \\
			CelebA     & ERM                     &  ResNet50     &  0.0001 & 0.01   & 128 & 50 & - \\
			CelebA     & {\unsupdro}                &  ResNet50     &  0.0001 & 0.01   & 128 & 50 & $\eta$=0.6  \\
			CelebA     & {\groupdro}-(Oracle)       &  ResNet50     &  0.00001 & 0.1   & 128 & 50 & $\eta$=0.1  \\
			CelebA     & {\groupdro}-(Partial)      &  ResNet50     &  0.00001 & 0.01   & 128 & 50 & $\eta$=0.1  \\
			CelebA     & {\worstoffdro}             &  ResNet50)    &  0.00001 & 0.1   & 128 & 50 & $\eta$=0.001  \\	
			\bottomrule
		\end{tabular}%
	}
	\caption{\footnotesize \label{tab:hparams} \textbf{Hyperparamter choices for Table~\ref{tab:quant_results}.} We list the hyper-parameters selected using the NVP procedure (see Section~\ref{sec:exp}) after performing grid-search. Learning rate and weight decay are an important set of parameters that influences the minority group performance. Each baseline has it's algorithm-specific hyper-parameter such as step-size of the simplex weights in {\groupdro} ($\eta_\text{GDRO}$), the loss threshold in {\unsupdro} ($\eta_\text{UDRO}$) and the step size for the group weights in {\worstoffdro} ($\eta_\text{WDRO}$). The symbol ``-" for batchsize in CMNIST experiments indicate the use of full-batch data for training. }
\end{table}

\subsection{Notes on Optimization}
\label{sec:opti_notes}
When using CVXPY to solve for the {\worstoffdro} assignments, we simplify the problem by replacing the data marginal distribution $\sum_{i=1}^{N}\hat{g}_{ij}$ in the denominator of \eqref{eq:worstoffdro_constraint_set_with_marginal} to $\bar{\mathbf{p}}_j$, thus providing us with a convex optimization problem. The code for the solver is available in Algorithm~\ref{alg:code}. \\

\begin{algorithm}[t]                                                                                    
\caption{Group Assignment Solver using CVXPY library}                                                               
\label{alg:code}
\definecolor{codeblue}{rgb}{0.25,0.5,0.5}
\definecolor{codekw}{rgb}{0.85, 0.18, 0.50}
\lstset{
  backgroundcolor=\color{white},
  basicstyle=\fontsize{7.5pt}{7.5pt}\ttfamily\selectfont,
  columns=fullflexible,
  breaklines=true,
  captionpos=b,
  commentstyle=\fontsize{7.5pt}{7.5pt}\color{codeblue},
  keywordstyle=\fontsize{7.5pt}{7.5pt}\color{codekw},
}
\begin{lstlisting}[language=python]
import cvxpy as cp
import numpy as np

class Solver(object):
  def __init__(self, n_controls, bsize, marginals, epsilon, labeled=None):
    """Group assignment solver.
    
    Arguments:
      n_controls: An integer for the number of groups.
      bsize: An integer for the batch size.
      marginals: A 2D array for the marginal distribution.
      epsilon: A float for the variance.
      labeled: A tuple for labeled data indices and their value.
    """
    self.X = cp.Variable((bsize, n_controls))
    self.l = cp.Parameter((bsize, 1))
    self.p = cp.Parameter((n_controls, 1), value=marginals)
    self.q = cp.Parameter(n_controls)
    if labeled is not None:
      labeled_idx, labeled_value = labeled
    counts = cp.sum(self.X, axis=0, keepdims=True)

    obj = ((self.l.T @ self.X) / self.p.T) @ self.q
    constraints = [self.X >= 0,
                        cp.sum(self.X, axis=1, keepdims=True) == np.ones((bsize, 1)),
                        cp.abs(cp.sum(self.X, axis=0, keepdims=True) / bsize - self.p.T) <= epsilon]
    if labeled is not None:
      constraints += [self.X[labeled_idx] == labeled_value]
      
    self.prob = cp.Problem(cp.Maximize(obj), constraints)

  def cvxsolve(self, losses, weights):
    """Solver.
    
    Arguments:
        losses: A 2D array for loss values.
        weights: A 1D array for group weights q.

    Returns:
      A 2D array for soft group assignments.
    """
    self.l.value = losses
    self.q.value = weights
    self.prob.solve()
    return self.X.value
    
\end{lstlisting}
\end{algorithm}

\subsection{An example of worst-off assignments}
\label{sec:worstoffdro_example}
Using three samples, we provide an example of the worst-off assignments made by our algorithm,
\begin{example}
Consider three samples with loss values $l_1 > l_2 > l_3$ and two predefined groups. Assume the marginal probabilities $\bar{\mathbf{p}}_1=0.6$ and $\bar{\mathbf{p}}_2=0.4$. Without loss in generality, assume $\frac{q_1}{\bar{\mathbf{p}}_1} > \frac{q_2}{\bar{\mathbf{p}}_2}$. With constraint $\mathcal{C}_{\mathbf{\bar p},\epsilon=0}$ and solving for {\worstoffdro} objective results in the following group assignments, $\tiny \{\hat{g}^{t}\} =\begin{pmatrix}
1 & 0\\
0.8 & 0.2\\
0 & 1
\end{pmatrix}$. Here, the $i^{\text{th}}$ row indicates the assignment given to sample $l_{i}$.
\end{example}
The group assignments can be derived by identifying a $\{\hat g\}$ that satisfies the constraints $\sum_{i=1}^N {\hat g}_{i 1} \le N\bar{\mathbf{p}}_1$ and $\sum_{i=1}^N {\hat g}_{i 2} \le N\bar{\mathbf{p}}_2$, where $N=3, \bar{\mathbf{p}}_1=0.6$ and $\bar{\mathbf{p}}_1=0.4$, and correspondingly maximizes {\worstoffdro} objective.
The above example informs us that group assignments depend on the magnitude of loss values in addition to the group weights and marginal probabilities. As indicated in the paper, we find that \textit{high loss samples are assigned to groups with high group weights and low marginal probabilities}, characteristic of a worst-off group.

Marginal constraints form a key ingredient of our algorithm as per the above example. Without the marginal constraints, the group assignments  $\tiny \{\hat{g}^{t}\} =\begin{pmatrix}
1 & 0\\
1 & 0\\
1 & 0
\end{pmatrix}$. That is, the assignments would have been made independent of the loss values and sparsely restricted to the group with large $\frac{q_j}{\bar{\mathbf{p}}_j}$ value.

\subsection{Discussion on Unsupervised DRO methods}
\label{sec:unsupdro_details}
In this section, we contrast {\worstoffdro} method against {\unsupdro} \cite{hashimoto2018fairness} and CVaR DRO \cite{levy2020large}. CVaR DRO \cite{levy2020large} is a coherent risk measure \cite{rahimian2019distributionally} that optimizes over a certain fixed-sized sub-populations within the training dataset. In essence, CVaR DRO is alike {\unsupdro} where the size of the sub-population is controlled by a threshold  on the loss value. In both CVaR DRO and {\unsupdro}, the size of the selected sub-population needs to be close to the size of the smallest group as identified in Section 3.2.2 of \cite{liu2021just}. Such a requirement demands wider hyper-parameter search space for  $\alpha / \eta$ parameters that control the size of the sub-populations. Our experiments justify this need, Table~\ref{tab:hparams_more} of Appendix~\ref{sec:app_hparam_tuning} shows that the search space of {\unsupdro} is twice relative to {\worstoffdro} in order to attain comparable average group accuracies. Clearly, a wider search space contributes to a harder model selection procedure. Moreover, scenarios where extensive search is not possible (eg, small validation set/dataset regimes) could result in incorrect/unstable model selection. From the perspective of the methodology, CVaR DRO / {\unsupdro} train only on the highest loss samples while discarding the remaining samples. In contrast, {\worstoffdro} does not discard any sample rather downweights/upweights as per the worst-off group assignment. This property aids in maintaining a high overall accuracy besides reaching good minority group accuracy.

\subsection{Discussion on MAR case}
The $\delta$ gap in Lemma~\ref{lemma:proof_2} captures the error in misspecification of $\bar{\mathbf{p}}$ in relation to $\mathbf{p}^{\star}$. When $\bar{\mathbf{p}}$ is misspecified due to the data being Missing at Random (MAR) rather than MCAR (Missing Completely at Random), a solution could be to estimate the propensity of missingness from other features; then use inverse propensity weighting to get a consistent estimate of the fraction of samples in each group as discussed in \cite{NEURIPS2020_invprop_udell}. Alternatively, if provided with the knowledge of the data-generation process, the core effort in extending our method simply involves using off-the-shelf estimators to characterize the probability distributions (see \cite{mohan2014graphical} for example.

\subsection{Hyper-parameter Tuning}
\label{sec:app_hparam_tuning}
Hyper-parameters were selected for each algorithm by performing an NVP procedure (see Section~\ref{sec:exp}). The best performing model was identified on the validation set associated with each dataset. All the measures were computed and averaged over three random runs. A list of all the hyper-parameters that were tuned for are available in Table~\ref{tab:hparams_more}. The final hyper-parameters selected for each method can be viewed from Table~\ref{tab:hparams}. 

\subsection{Additional Experimental Results}\label{sec:app_add_result}
We provide the following additional results, first, in Figure~\ref{fig:labpercent_average}, we show average group accuracies as a function of labelled sample counts. The average group accuracies of the {\worstoffdro} method are closely similar across various labelled sample counts. The {\groupdro} method shows a slight increasing trend in the average accuracies as the number of labelled samples increase. Next, corresponding to the quantitative results of Table~\ref{tab:quant_results} in the paper, we provide standard deviations of those results in Table~\ref{tab:quant_results_std}. The standard deviations for all the methods are comparable. 

\subsection{More details on the datasets}
\label{sec:app_more_details}
\subsubsection{Waterbirds}
This dataset was first introduced in \cite{sagawa2019distributionally} and has been developed by cropping images of birds from the CUB dataset \citep{wah2011caltech} and pasting them on the backgrounds from the Places dataset \citep{zhou2017places}. A ResNet50 model, pre-trained with ImageNet weights, has been used for training in experiments on this dataset. No data augmentation has been applied for any of the Algorithms.   

\subsubsection{CMNIST}
CMNIST dataset comprised of two groups of MNIST images each with a specific color. As per the description in the main paper, the target label is flipped with a specific correlation to the color. Following the implementation of \cite{creager2021environment}, the digit images contain two channels and were downsampled to $14\times14$ pixels. 

\subsubsection{Adult}
The Adult dataset used in the paper was obtained from the UCI repository \citep{dua2017uci}. It contains $44,842$ samples. The features that were used in the experiments include ``age", ``workclass", ``fnlwgt", ``education",                           ``education-num", ``marital-status", ``occupation",
                          ``relationship", ``race", ``sex", ``capital-gain",
                           ``capital-loss", ``hours-per-week", ``native-country",
                           ``income". A positve target label in this dataset is indicated by the attribute ``income-bracket" being above $50K\$$.
        
\subsubsection{CelebA}                   
For this dataset, the official train-val-test splits as recommended by \cite{liu2015deep} has been used. Similar to the Waterbirds experiments, a pre-trained ImageNet-based ResNet50 model has been used for the implementations.

\begin{table*}[t] 
    \centering
	\footnotesize
	\begin{tabular*}{0.95\textwidth}{l @{\extracolsep{\fill}}
			*{16}{S[table-format=3.0]}}
		\toprule
		& \multicolumn{2}{c}{Waterbirds} & \multicolumn{2}{c}{CMNIST} & \multicolumn{2}{c}{Adult} & \multicolumn{2}{c}{CelebA}\\
		\cmidrule{2-3} \cmidrule{4-5} \cmidrule{6-7} \cmidrule{8-9} 
		& {\cellcolor{gray!25}\textbf{min}}  & {\textbf{avg}} & {\cellcolor{gray!25}\textbf{min}} & {\textbf{avg}} & {\cellcolor{gray!25}\textbf{min}} &  {\textbf{avg}} & {\cellcolor{gray!25}\textbf{min}} &  {\textbf{avg}}\\ 
		\midrule\midrule
		{\groupdro} (Oracle)   & \cellcolor{gray!25}{$0.45$} &  {$0.02$}  &  {\cellcolor{gray!25}$0.57$}  &  {$0.33$}  &  {\cellcolor{gray!25}$0.94$}  &  {$0.87$}  & {\cellcolor{gray!25}$1.31$}  &  {$0.27$}  \\ \midrule
		ERM                    & {\cellcolor{gray!25}$0.99$} &  {$0.07$}  &  {\cellcolor{gray!25}$1.47$}  &  {$0.49$}  &  {\cellcolor{gray!25}$1.06$}  &  {$0.28$}  & {\cellcolor{gray!25}$3.02$}  &  {$0.05$}  \\
		{\unsupdro}            & {\cellcolor{gray!25}$0.89$} &  {$0.12$}  &  {\cellcolor{gray!25}$1.39$}  &  {$0.64$}  &  {\cellcolor{gray!25}$1.73$}  &  {$0.18$}  & {\cellcolor{gray!25}$1.81$}  &  {$0.02$}  \\
		{\groupdro} (Partial)  & {\cellcolor{gray!25}$1.25$} &  {$1.25$}  &  {\cellcolor{gray!25}$0.76$}  &  {$0.20$}  &  {\cellcolor{gray!25}$0.32$}  &  {$0.34$}  &   {\cellcolor{gray!25}$4.54$}  &  {$0.08$} \\ 
		{\worstoffdro}         & {\cellcolor{gray!25}$0.96$} &  {$0.17$}  &  {\cellcolor{gray!25}{$1.14$}}  &  {$0.35$}  &  {\cellcolor{gray!25}$0.21$}  &  {$0.13$}  &   {\cellcolor{gray!25}$2.05$}  &  {$0.10$}    \\ 	
		\bottomrule
	\end{tabular*} 
	\vspace{-0.05in}
	\caption{ \label{tab:quant_results_std} \small \textbf{Quantitative Results - Standard Deviations.} The standard deviations over three random runs of Table~\ref{tab:quant_results} is provided. For baselines, we consider an ERM, {\unsupdro} \citep{hashimoto2018fairness}, {\groupdro} (Partial) for partly labelled {\groupdro} \citep{sagawa2019distributionally} method, {\groupdro} (Oracle) for the fully supervised model. } 
\end{table*} 

\subsection{Ablation study on increasing the constraint set size.}
\label{sec:increase_set}
We conduct experiments on {\worstoffdro} method for different values of the $\epsilon$ parameter in the set $\{0, 0.001, 0.01, 0.1, 1\}$. The test set accuracies on the minority group and average group are reported in Figure~\ref{fig:epsilon_exps}. Increasing the $\epsilon$ value also increases the constraint set size because the marginal constraint is gradually relaxed. Figure~\ref{fig:epsilon_exps} shows that the both minority group accuracy and average group accuracy values reduce with increase in $\epsilon$ value beyond $0.1$ threshold. The accuracy values for $\epsilon \le 0.01$ are comparable. A similar trend hold on other datasets as well.

\begin{figure*}[t!]
	\begin{subfigure}[b]{0.4\linewidth}
	\includegraphics[width=\linewidth]{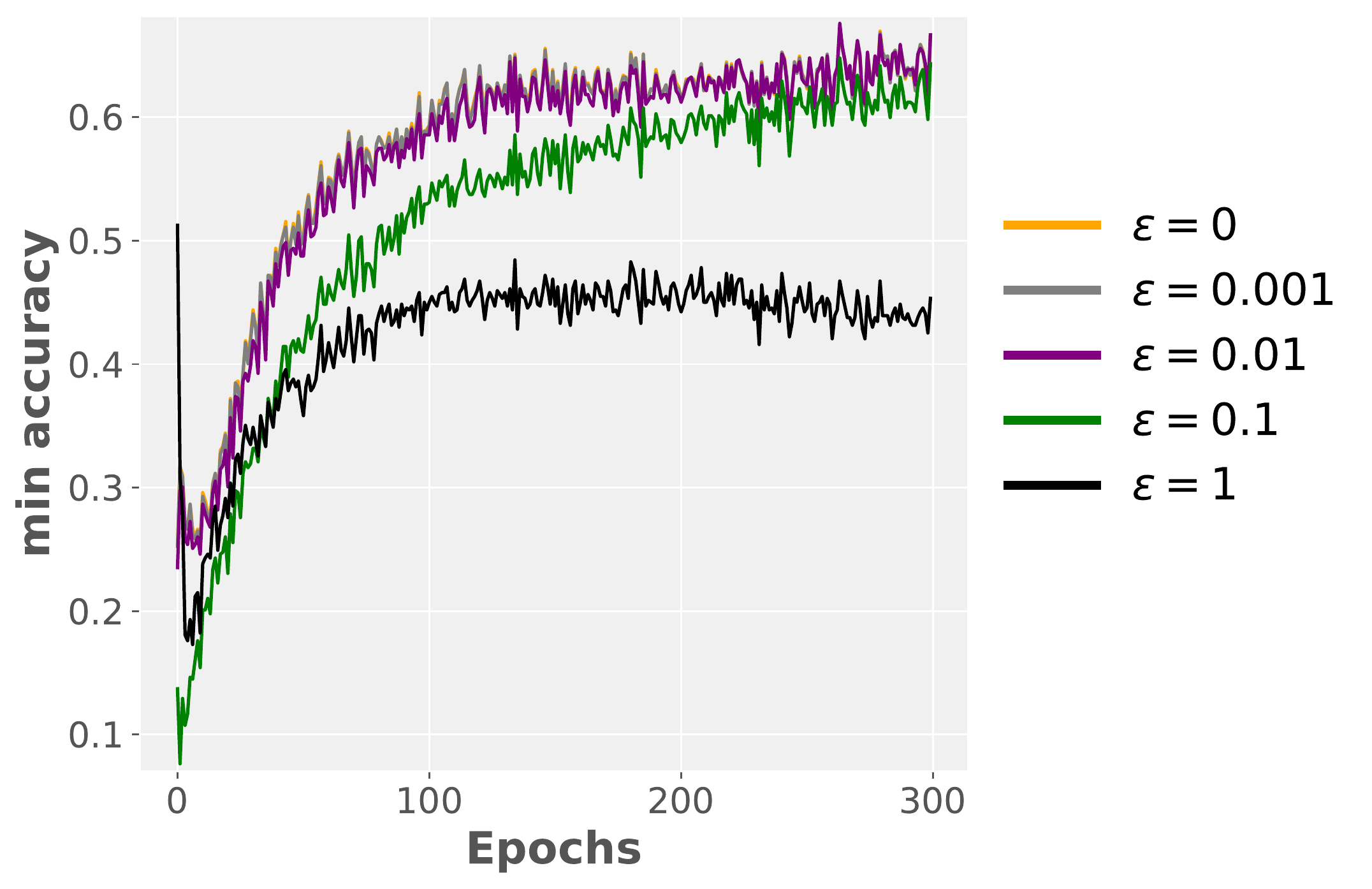}
	\caption{Waterbirds Minority Group Accuracy}
	\end{subfigure} \hspace{2cm}
	\begin{subfigure}[b]{0.4\linewidth}
	\includegraphics[width=\linewidth]{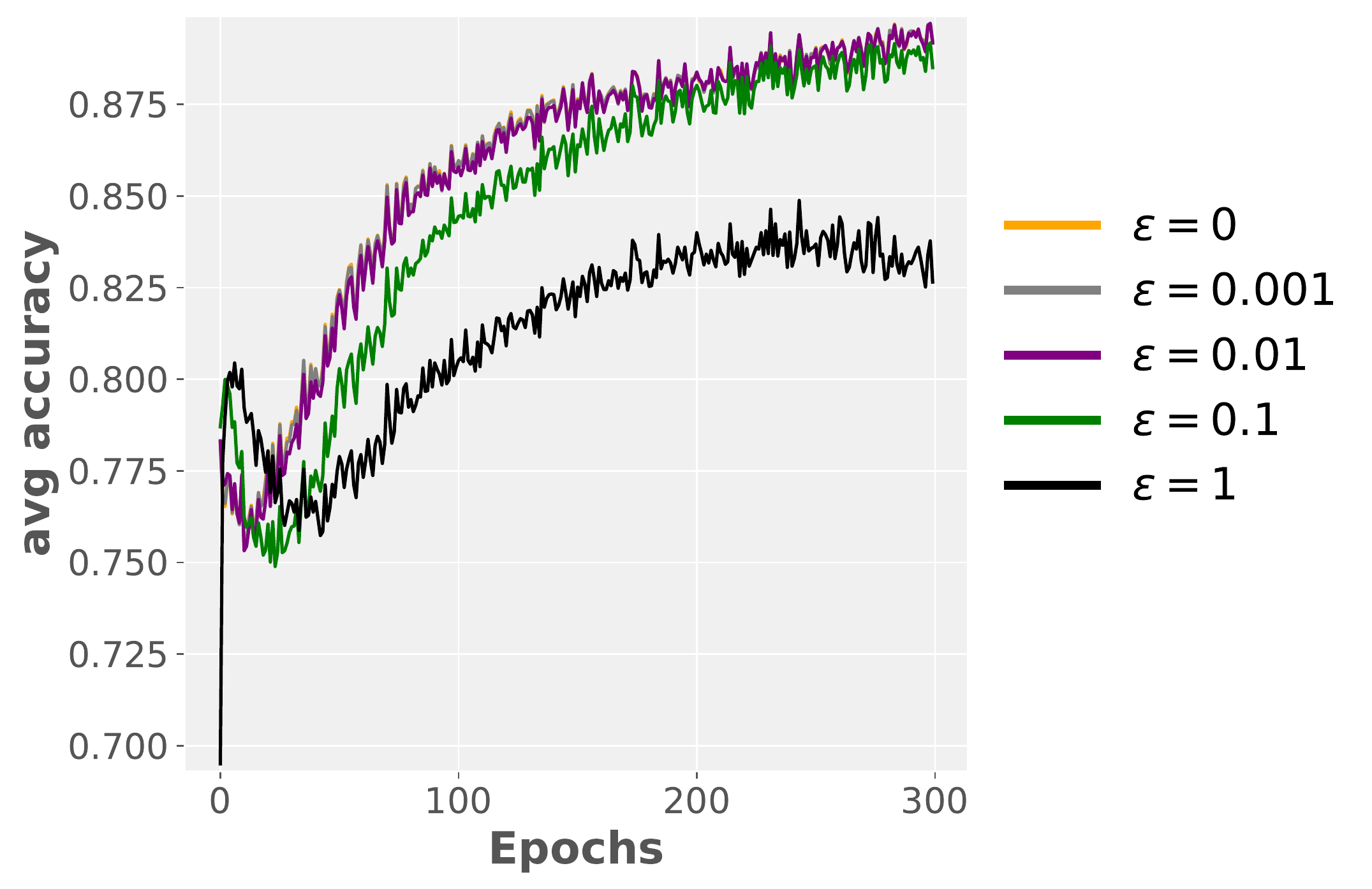}
	\caption{Waterbirds Average Group Accuracy}
	\end{subfigure}
	\centering
	\caption{ \textbf{Varying the $\epsilon$ parameter in the constraints.} The marginal constraint is gradually relaxed by increasing the $\epsilon$ parameter. The accuracies in the plots are computed on the test sets. Performance of the models with $\epsilon \le 0.01$ are similar, however, the accuracies drop when increasing $\epsilon$ beyond $0.01$ threshold.}
	\label{fig:epsilon_exps}
\end{figure*}

\end{document}